\documentclass{article}

\usepackage[preprint]{neurips_2026}

\usepackage{microtype}
\usepackage{graphicx}
\usepackage{subcaption}
\usepackage{booktabs} %

\usepackage{times}
\usepackage[utf8]{inputenc} %
\usepackage[T1]{fontenc}    %
\usepackage[unicode]{hyperref}       %
\usepackage{calc}
\usepackage{multirow}
\usepackage{amsfonts}       %
\usepackage{nicefrac}       %
\usepackage{xcolor}         %
\usepackage{natbib}
\usepackage{amsmath, amssymb, amsthm}
\usepackage{mathtools}
\usepackage{keytheorems} %

\usepackage{algcompatible}
\usepackage{algorithm}

\usepackage[capitalise]{cleveref}
\usepackage{adjustbox}
\usepackage{longtable}
\usepackage{titlesec}
\usepackage{pdflscape}
\usepackage{siunitx}
\usepackage{wrapfig} %
\usepackage{enumitem}

\newcommand{\myparagraph}[1]{\noindent\textbf{#1}\xspace}

\newkeytheorem{theorem,proposition,lemma,corollary}[style=plain]
\newkeytheorem{definition,assumption}[style=definition]
\newkeytheorem{remark}[style=remark]

\def\epsilon{\varepsilon}

\newcommand{\mech}{\mathcal{M}}

\newcommand{\accuracy}{\ensuremath{\operatorname{acc}}}
\newcommand{\wga}{\ensuremath{\operatorname{WGA}}}
\newcommand{\round}{\operatorname{round}}

\DeclareMathOperator{\sample}{\text{sample}}

\def\x{\mathcal{X}}%
\def\y{\mathcal{Y}}%
\def\z{\mathcal{Z}}%

\newcommand{\categorical}{\operatorname{Categorical}}

\newcommand{\N}{\mathcal{N}}%

\newcommand{\ex}{\mathbb{E}}
\newcommand{\cov}{\operatorname{Cov}}
\newcommand{\prob}{\mathbb{P}}
\newcommand{\var}{\operatorname{Var}}

\newcommand{\clip}{\operatorname{clip}}

\newcommand{\n}{\|}

\usepackage{xspace}
\makeatletter
\DeclareRobustCommand\onedot{\futurelet\@let@token\@onedot}
\def\@onedot{\ifx\@let@token.\else.\null\fi\xspace}

\newcommand{\eg}{{e.g}\onedot} 
\newcommand{\ie}{{i.e}\onedot}

\makeatother

\newcommand{\privHighlight}[1]{{\color{blue}{#1}}}

\newcommand{\privMathHighlight}[1]{{\mathcolor{blue}{#1}}}

\newcommand{\propHighlight}[1]{{\color{purple}{#1}}}
\newcommand{\propMathHighlight}[1]{{\mathcolor{purple}{#1}}}

\newcommand{\avglambda}{\bar{\lambda}}
\newcommand{\avgtheta}{\bar{\theta}}
\newcommand{\truegrad}{\nabla}
\newcommand{\stochgrad}{\hat{\nabla}}
\newcommand{\opttheta}{\theta^\star}

\newcommand{\DPSGD}{DP-SGD\xspace}
\newcommand{\method}{Adaptively Sampled and Clipped Worst-case Group Optimization\xspace}
\newcommand{\acronym}{ASC\xspace}
\newcommand{\zhou}{aZB\xspace}
\newcommand{\zhouprop}{\zhou-prop\xspace}
\newcommand{\lossreweighting}{DP Loss Reweighting\xspace}
\newcommand{\lossreweightingShort}{DP-LRW\xspace}

\newcommand{\WGA}{WGA\xspace}
\newcommand{\AVG}{AVG\xspace}

\newcommand{\nondpGroupReweight}{{\tt{Group-Reweight}}\xspace}
\newcommand{\dpGroupReweight}{{\privHighlight{\tt{DP-}}\tt{Group-Reweight}}\xspace}

\title{Adaptive Sampling and Clipping for \\ Private Worst-Case Group Optimization}

\author{%
  Max Cairney-Leeming\\
  Institute of Science and Technology Austria (ISTA)\\
  \texttt{max.cairney-leeming@ist.ac.at}
  \And
  Amartya Sanyal \\
  University of Copenhagen \\
  \texttt{amsa@di.ku.dk}
  \And
  Christoph H. Lampert \\
  Institute of Science and Technology Austria (ISTA) \\
  \texttt{chl@ist.ac.at}
}

\begin{document}

\maketitle

\begin{abstract}

A central requirement for the acceptance of machine learning methods for human-centric
tasks is that they should be \emph{fair}, in the sense that they should work comparably
well for individuals from different societal groups.
A second, equally important, requirement is that they should respect the \emph{privacy} of
user data.
While techniques exist to address each aspect in isolation, such as \emph{worst-case
group optimization} for the former and \emph{differentially private SGD} for the latter, these
are often at odds with with each other, and no practical method currently exists to enforce
both requirements simultaneously.
In this work, we overcome this problem and propose an algorithm for optimizing
the worst-case group accuracy in a differentially private way.
Our main contribution is \emph{\acronym (\method)}, which adaptively controls both
the sampling rate and the clipping threshold of each group's gradient contributions.
Thereby, it is able to reweight the training objective in favor of harder-to-learn
groups, while keeping the noise required to enforce privacy low enough to
preserve model utility.
Our experiments show that \acronym achieves substantially higher worst-case group
accuracy than prior work, without sacrificing overall average accuracy.

\end{abstract}

\section{Introduction}

Two key aspects of trustworthy machine learning are \emph{privacy} and \emph{fairness}, and it is natural to want a training process that achieves both at the same time. However, a number of papers have concluded that differentially private SGD (\DPSGD), the most common private learning algorithm, is less fair than standard SGD, which is of course not always fair itself. While these works have developed a variety of methods to privately achieve the same (un)fairness as SGD, they have not solved a more general question, which has been studied by just one prior work \citep{zhouDifferentiallyPrivateWorstgroup2024}: \emph{how do we learn a truly fair and private model?} In this work, we develop practical algorithms for this goal, and compare them both theoretically and empirically to prior work.

Many notions of \emph{algorithmic fairness} have been proposed~\citep{barocas-hardt-narayanan}. In this work, we adopt the notion of fairness as achieving \emph{highest possible worst-group accuracy}, also known as Rawls's max-min criterion \citep{Rawls1974Maximin}, which is a flexible measure, applicable to both binary and multi-class settings where the dataset is divided into known groups (\eg demographic categories). Optimising for this objective is a form of Distributionally Robust Optimization (DRO) \citep{levyLargeScaleMethodsDistributionally2020,sagawaDistributionallyRobustNeural2020}. %
Further, we take the widely used notion of \emph{differential privacy}~\citep{dworkDifferentialPrivacy2006} to guarantee that the privacy of user data is protected, and base our algorithms on the foundation of DPSGD \citep{abadi2016deep}, which uses gradient clipping and adds noise to updates to guarantee privacy.

Beginning with \cite{tranDifferentiallyPrivateEmpirical2021}, several works have identified mechanisms that could cause DPSGD's unfairness, as well as several different approaches to mitigate these effects, including extra regularization terms \cite{tranDifferentiallyPrivateEmpirical2021,tranDifferentiallyPrivateFair2021}, altered clipping schemes \cite{xuRemovingDisparateImpact2021}, and adaptive clipping \citep{esipovaDisparateImpactDifferential2022}. %
However, these approaches only seek to undo the unfairness caused by DPSGD's internal workings, and do not address problems where SGD does not give a fair outcome.
From a complementary direction, \cite{zhouDifferentiallyPrivateWorstgroup2024} introduce algorithms for SGD-based private worst-group optimization, and prove their convergence properties. However, they do not experimentally test their algorithms, and we have found the design to be ill-suited for practical use, as it does not easily support groups of different sizes, which is almost universal in real-world datasets. We also note a recent work on private DRO, \cite{xuDifferentiallyPrivateNonconvex2026}, which studies other variants of DRO where datasets do not have group structure.

We make the following contributions to this field:
\begin{enumerate}
  \item A new, easy-to-implement baseline based on loss reweighting.
  \item A practical algorithm for private worst-group optimization, based on adaptively setting the sampling and clipping parameters of \DPSGD.
  \item A study of convergence for stochastic, noisy worst-group optimization algorithms, and a comparison of sampling variance and convergence speed between different algorithms.
  \item The first experimental comparison of private worst-group optimization algorithms, showing that our algorithm achieves higher worst-group accuracy than baselines without sacrificing overall accuracy.
\end{enumerate}

\section{Worst-Group Optimization}

We study a \emph{supervised learning task with group information}, \ie
each data point is also annotated with a group label at training time.
Formally, for an input set $\x$, an output set $\y$, and a set of
groups, $[G]=\{1,\dots,G\}$, we assume a data distribution $P_g$
over $\z=\x\times\y$ for each group.
For training, a dataset $D_g\sim P_g$ of size $n_g$ is given
for each $g\in[G]$, which form the overall training set,
$D=\bigcup_{g=1}^G D_g$, of size $N=\sum_{g=1}^G n_g$.
At test time, group labels are not available to the model.

\myparagraph{Worst-case group optimization (WGO).}
For a class of models parametrized by $\theta \in \Theta$, let
$\accuracy(\theta, S)$ denote the accuracy of a model\footnote{For
conciseness, we will identify models and their parameters.} $\theta$ on a dataset $S$.
For any model $\theta$ we are interested not only in its quality on
the whole dataset, $\accuracy(\theta, D)$, but also in the lowest accuracy
it achieves on any group,
\begin{equation}
  \wga(\theta, D) := \min_{g\in[G]} \accuracy(\theta, D_g),
\end{equation}
which we call the \emph{worst-case group accuracy (WGA)}.

To train models for the highest $\wga$, we change the maximization of $\accuracy$, which is a discrete function, into a minimization over a continuous loss
function, $\ell(z, \theta)$, and we relax the minimization
over groups into a linear program over per-group weights. The final optimization problem is then given by:
\begin{equation}
  \min_\theta \max_{\lambda\in\Delta} \quad\sum_{g=1}^G \Big[ \frac{\lambda_g}{n_g}\!\sum_{z\in D_g}\ell(z,\theta)\Big],
  \label{eq:WGO}
\end{equation}
where $\Delta=\{\lambda\in[0,1]^G: \sum_g \lambda_g = 1\}$
is the probability simplex, and the solution to \cref{eq:WGO} will
identify the worst-case (highest loss) group because the objective function is a linear program in $\lambda$, so its optimum will
be attained at a corner of the simplex.

This approach is also known as \ group distributionally
robust optimization \citep{sagawaDistributionallyRobustNeural2020,zhangStochasticApproximationApproaches2023}, as it is equivalent to minimizing the
training loss not for a fixed data distribution, but across
all mixture distributions with components $(P_g)_{g\in[G]}$, parameterized by $\lambda$.
\begin{figure}[t]
  \begin{minipage}[t]{.49\linewidth}\centering
    \begin{algorithm}[H]
      \centering
      \caption{Non-private WGO }  \label{alg:non-dp-wgo}
      \begin{algorithmic}[1]
        \REQUIRE
        batch size $M$,
        learning rate $\eta$,

        \smallskip
        \STATE initialize model parameters $\theta_1\in \Theta$
        \STATE initialize per-group weights $\lambda_g=\frac{1}{G}$ for $g\in[G]$

        \FOR{$t = 1,\dots, T$}

        \STATE sample a batch $B$ of size $M$ from $D$ uniformly
        \STATE
        $\displaystyle \theta_{t+1} \leftarrow  \theta_{t} -
        \frac{\eta}{M}   \sum\limits_{z\in B}
        \lambda_{g(z)}\frac{N}{n_{g(z)}}\nabla\ell(z,\theta_t)    $
        \IF{$t \operatorname{mod} k = 0$}
        \STATE $\lambda \leftarrow \text{\nondpGroupReweight}(\lambda)$
        \ENDIF %
        \ENDFOR %
      \end{algorithmic}
    \end{algorithm}
  \end{minipage}
  \hfill
  \begin{minipage}[t]{.49\linewidth}
    \centering
    \begin{algorithm}[H]
      \caption{\dpGroupReweight($\lambda$)}  \label{alg:group-loss-reweighting}
      \begin{algorithmic}[1]
        \REQUIRE current weights $\lambda$;
        sampling rate $\gamma^\ell$,
        temperature $\eta^\ell$,
        \privHighlight{noise scale $\tau^2$,
        clipping threshold $\zeta$}
        \smallskip
        \FOR{$g=1,\dots, G$}
        \STATE sample a batch $B^\ell_g$ size $\lfloor\gamma^\ell \cdot n_g \rfloor$ from $D_g$ \label{line:zhou:loss-batches}

        \STATE $\displaystyle L_g\!\leftarrow\!\sum\limits_{z \in B^\ell_g}  \privMathHighlight{\clip\!\big(}\ell(z,\theta_{t})\privMathHighlight{, \zeta\big)}$
        
        \STATE $\displaystyle\tilde{\lambda}_g \leftarrow \lambda_g \exp \Bigg(\eta^\ell \frac{ L_g \privMathHighlight{+\N(0,\tau^2)} }{ |B^\ell_g|}\Bigg)$
        \ENDFOR
        \STATE \bf{return} $\displaystyle
        \tfrac{\tilde{\lambda}}{\|\tilde{\lambda}\|_1} \in \Delta_G$.

      \end{algorithmic}
    \end{algorithm}
  \end{minipage}
\end{figure}
For non-private settings, \Cref{alg:non-dp-wgo} and \cref{alg:group-loss-reweighting} provide a straightforward way to solve the worst-group optimization problem, based on the algorithms of \cite{zhangStochasticApproximationApproaches2023}. It consists of a parameter update from a minibatch, where each point's contribution is weighted by $\lambda$ and the group size, and a periodic update of the group weights using the losses of the model on each group.

\section{Differential privacy}
\label{sec:dp-def}
Following the seminal work of~\citet{dworkDifferentialPrivacy2006},
we consider an algorithm private, if one cannot determine with high probability
from its output if any data point was part of the input or not.
As our datasets are split by group, we give a slightly specialized formal definition of privacy, such that $\mech$, taking a dataset $D$ as input, and returning output   $o\in\mathcal{O}$ is differentially private if
$\prob(\mech(D)\in O)\leq e^\epsilon\prob(\mech(D')\in O) + \delta$ for all sets $O\subset\mathcal{O}$ and any two datasets $D$ and $D'$, which differ by replacing one element from any one group, \ie $D=\bigcup_{g=1}^G D_g$ and $D'=\bigcup_{g=1}^G D'_g$, where $D_g=D'_g$ for all $g\neq g^*$, and $D_{g^*}$ and $D'_{g^*}$ differ by one element. This definition was used previously by \cite{zhouDifferentiallyPrivateWorstgroup2024}, though we note that it does not protect the group membership of a datapoint, but its features and label.

In our analysis we also make use of \emph{Rényi Differential Privacy (RDP)}~\citep{mironovRenyiDifferentialPrivacy2017}, which substitutes
the above inequality by $D_{\alpha}(\mech(D)\|\mech(D'))\leq \epsilon$, where $D_{\alpha}(P,Q):=\frac{1}{\alpha-1}\log\mathbb{E}_{o\sim Q}\big(\frac{P(o)}{Q(o)}\big)^{\alpha}$ to define $\mech$ being $(\alpha,\epsilon)$-RDP, for $\alpha>1$.
Rényi DP offers several advantages over $(\epsilon,\delta)$-DP when analyzing SGD-like algorithms, and we detail its relevant properties in \cref{app:renyi}.

Private model training commonly relies on the \DPSGD algorithm~\citep{abadi2016deep}, given in
\Cref{alg:dp-sgd} in the appendix.
It follows the standard steps of mini-batch SGD, except for two modifications:
1) the function $\clip(x;\xi)=x\,\min(\frac{\xi}{\|x\|_2},1)$ is applied to
each computed gradient vector, ensuring that no data point contributes
more than a vector of norm $\xi$ to a model update, and
2) Gaussian noise of variance $\sigma^2$ is added to the accumulated
gradient vectors before using them for a model update.

We also note that privacy limits the contribution of any one group to a model update step. Using the limiting behavior shown by \cite{raisaSubsamplingNotMagic2024} to illustrate, let $\xi(q,T)$ be the best clipping threshold for a sampling rate of $q$ and $T$ steps and a fixed privacy budget. In the limit of $T \to \infty$, $\xi(q,T) = 1/q \cdot \xi(1,T)$, so the per-step contribution of group $g$ is at most $(q \cdot n_g) \cdot \xi(q,T) = n_g \xi(1,T)$, meaning that the contributions of a very small group to any \DPSGD-like algorithm are limited, and so it is possible, with extremely small groups that their updates are overwhelmed by other groups.

\section{Private WGO: Loss reweighting baseline}
\label{sec:loss-reweighting}
\begin{figure}[t]
  \begin{minipage}[t]{.49\linewidth}\centering

    \begin{algorithm}[H]
      \caption{\lossreweighting}  \label{alg:dp-loss-reweighting}
      \begin{algorithmic}[1]
        \REQUIRE %
        number of steps $T$,
        batch size $M$,
        learning rate $\eta$,
        gradient clipping threshold  $\xi$,
        noise scale $\sigma^2$,
        weight update frequency $k$
        \smallskip
        \STATE initialize model parameters $\theta_1\in \Theta$
        \STATE initialize group weights $\lambda_g=\frac{1}{G}$ for $g\in[G]$
        \FOR{$t = 1,\dots, T$}
        \STATE sample a batch $B$ of size $M$ from $D$ uniformly.
        \STATE // compute and clip the weighted gradients:
        \STATE $\displaystyle u \leftarrow \!\sum\limits_{z\in B}
        \clip\!\left( \lambda_{g(z)}N / n_{g(z)} \cdot \nabla\ell(z,\theta_t), \xi\right)$
        \STATE // update model parameters:
        \STATE    $\displaystyle \theta_{t+1} \leftarrow  \theta_{t} -
        \frac{\eta}{M} \Big( u + \N(0,  \sigma^2 I)    \Big)$ %
        \IF{$t \operatorname{mod} k = 0$}

        \STATE $\lambda \leftarrow \text{\dpGroupReweight}(\lambda)$
        \ENDIF %
        \ENDFOR %
      \end{algorithmic}
    \end{algorithm}

  \end{minipage}
  \hfill
  \begin{minipage}[t]{.49\linewidth}
    \centering
    \begin{algorithm}[H]
      \caption{\method (\acronym)}\label{alg:method}
      \begin{algorithmic}[1]
        \REQUIRE %
        number of steps $T$,
        batch size $M$, %
        learning rate $\eta$, %
        noise scale $\sigma^2$, %
        weight update freq. $k$,
        per-step privacy budget $(\alpha,\epsilon^s)$-RDP
        \smallskip
        \STATE initialize   $\theta_1\in \Theta$,
        $\lambda_g=\frac{1}{G}$ for $g\in[G]$
        \FOR{$t = 1,\dots, T$}
        \STATE set group batch sizes $m \leftarrow \round_M(\lambda M)$\label{line:meth:round-batch-sizes}
        \FOR{$g=1,\dots, G$} \label{line:meth:start-sampling}
        \STATE sample $B_g$ of size $m_g$ from $D_g$
        \label{line:meth:per-group-sampling}
        \STATE // compute and clip gradients:
        \\ \quad $\displaystyle u_g \leftarrow \!\sum_{z\in B_g}
        \clip\!\Big(\nabla\ell(z,\theta_t), \xi_{\sigma,\epsilon_s}\Big(\frac{m_g}{n_g}\Big)\Big)$ %
        \label{line:meth:per-group-clip-grads}
        \ENDFOR
        \STATE // update model parameters: 
        \STATE$\displaystyle \theta_{t+1} \leftarrow  \theta_t -
        \frac{\eta}{M} \Big( \sum_{g \in [G]} u_g + \N(0,  \sigma^2 I)    \Big)$
        \label{line:meth:model-update}
        \IF{$t \operatorname{mod} k = 0$}
        \STATE $\lambda \leftarrow \text{\dpGroupReweight}(\lambda)$
        \ENDIF %
        \ENDFOR %
      \end{algorithmic}
    \end{algorithm}
  \end{minipage}
\end{figure}
To design a private WGO algorithm, we can consider the parameter update steps (\Cref{alg:non-dp-wgo}) and the group reweighting part (\cref{alg:group-loss-reweighting}) individually. Indeed, \cref{alg:group-loss-reweighting} is easily privatized, shown by the clipping and noise addition steps given in blue, which implement a subsampled Gaussian mechanism. However, it is less obvious how we can privatize the main body of the algorithm, the weighted gradient descent, without the privacy changes affecting the weighting.

The key tenet of DPSGD is that it limits the contributions of each datapoint by clipping, to bound the sensitivity of the Gaussian mechanism that it implements. However, this is in direct contradiction to the aims of worst-group optimization, which wants to give bigger contributions to harder datapoints, as seen in the $\lambda_{g(z)}N/n_{g(z)}$ term, which rescales a point's contribution according to its group's relative size and importance. It is very likely that this term is above 1 for a minority group, but since clipping must be applied to the whole contribution from a datapoint, the clipping may undo this weighting. 

Despite this potential issue, \Cref{alg:dp-loss-reweighting} presents a good baseline for private worst-group optimization, as it preserves exactly the same privacy guarantees as \DPSGD (stated formally in \cref{lem:dp-loss-reweighting-privacy}), by keeping the same clipping and sampling behavior, while optimizing (with bias) for a non-uniform objective. Our later results will show that this algorithm performs better than \DPSGD (\cref{sec:results_accuracy}), as expected, but introduces significant variance in the updates (\cref{sec:variance-comparison}).

\section{Private WGO: Adaptive sampling}
\label{sec:asc-our-method}
However, loss reweighting is not the only approach to implement weighted gradient descent, and the fact that \cref{alg:dp-loss-reweighting} operates within the sampling structure of \DPSGD means that, among other things, the variance of the update from a minority group is significantly higher, as it is based on a smaller sample. This motivates a search for better approaches, and so we propose a deeper change to the \DPSGD structure: adaptively changing the sampling scheme to achieve the group weighting.

\textbf{Adaptive sampling}: The algorithm uses sampling to achieve the group weighting: out of the total batch size $M$, it samples $m_g := \lambda_g M$ points (rounded fairly, see \cref{app:our-method-details}) from group $g$, meaning that the expected update matches the objective's weighting (line \ref{line:meth:per-group-sampling}). These proportions will naturally change as $\lambda$ is updated.

However, without further changes to the privacy-related behaviour of the algorithm, this sampling scheme does not have a (reasonable) privacy guarantee, without assuming the worst-case that $\lambda_g=1$ for any groups could occur for any group at any time. To avoid this, we also adaptively set the clipping thresholds to guarantee a consistent, known and tight privacy guarantee, regardless of how the sampling changes:

\textbf{Balanced Clipping}: Each step of the parameter update is designed to achieve a $(\alpha,\epsilon^s)$-RDP privacy bound. As the sampling rate $m_g/n_g$ and the noise variance $\sigma^2$ are already chosen, we can set the clipping threshold, defined by the function $\xi_{\sigma,\epsilon_s}(m_g/n_g),$ to attain the desired privacy guarantee. This function is defined so that the following bound on the privacy of the step holds for all groups $g$:
\begin{equation}
  \epsilon_\alpha\left(\frac{m_g}{n_g}, \frac{\sigma}{\xi_{\sigma,\epsilon_s}(m_g/n_g)}\right)  \leq \epsilon^s,
\end{equation}
where $\epsilon_\alpha(\gamma, \beta)$ computes the Renyi DP guarantee of a  Gaussian mechanism with noise multiplier $\beta$ applied to a batch sampled at rate $\gamma$. Thus, clipping a point to norm $\xi_{\sigma,\epsilon_s}(m_g/n_g)$ (line \ref{line:meth:per-group-clip-grads}) ensures that the sensitivity $\frac{\sigma}{\xi_{\sigma,\epsilon_s}(m_g/n_g)}$  is not too high, and the privacy loss, including the effect of subsampling, is controlled.

In \cref{app:renyi} we formally define $\epsilon_\alpha$, which is a continuous, though relatively complex function. Thus, we define $\xi_{\sigma,\epsilon_s}(m_g/n_g)$ using numerical methods to invert $\epsilon_\alpha$. There is always at least one solution, given by the scenario without subsampling: $\epsilon_\alpha(1, \sigma/\xi) = \epsilon^s$, so $\xi = \sqrt{(\sigma^2 \epsilon^s)/ (2 \alpha)}$.
 
Note that the noise is added to the sum of the group-wise updates,
instead of adding noise to release each update separately, reducing the total amount of DP noise needed. This is possible because the algorithm is designed to modify the clipping thresholds, rather than setting fixed clipping thresholds and adaptively varying the noise variance.

We now establish the privacy guarantees of \Cref{alg:method}, where the full proof is given in \cref{app:our-method-details}.

\begin{theorem}\label{thm:method_dp}
  \Cref{alg:method} is $(\alpha,\epsilon)$-RDP where
  \begin{equation}
    \epsilon  = T \epsilon^s + \left\lfloor\frac{T}{k}\right\rfloor \epsilon_\alpha(\gamma^\ell, \tau/\zeta),
  \end{equation}
  for the predefined $(\alpha, \epsilon^s)$-RDP privacy budget for each step.
\end{theorem}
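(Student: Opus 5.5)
The plan is to treat \Cref{alg:method} as an adaptive composition of $T$ parameter-update mechanisms and $\lfloor T/k\rfloor$ calls to \dpGroupReweight, and then apply the adaptive composition theorem for RDP (\cref{app:renyi}). At a fixed order $\alpha$, the RDP guarantees of adaptively chosen mechanisms add. The sampling sizes $m_g$ and the clipping thresholds $\xi_{\sigma,\epsilon_s}(m_g/n_g)$ at step $t$ depend only on $\lambda$. In turn, $\lambda$ is a function of previously released outputs: the earlier noisy reweighting outputs and the earlier iterates $\theta_t$. So each step's mechanism is chosen adaptively from past outputs, and composition yields
\[
\epsilon = \sum_{t=1}^T \epsilon^{(t)}_{\text{update}} + \sum_{j=1}^{\lfloor T/k\rfloor} \epsilon^{(j)}_{\text{reweight}} .
\]

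\textbf{Reweighting step.} Fix neighbouring $D,D'$ that differ in one element of group $g^*$. Only the batch $B^\ell_{g^*}$ can contain the differing point, and that batch is sampled at rate $\gamma^\ell$. The losses are clipped to $\zeta$, so $L_{g^*}$ has sensitivity $\zeta$ and receives noise $\N(0,\tau^2)$. This is a subsampled Gaussian mechanism with noise multiplier $\tau/\zeta$, hence $(\alpha,\epsilon_\alpha(\gamma^\ell,\tau/\zeta))$-RDP. The other coordinates $L_g$ for $g\ne g^*$ have identical distributions under $D$ and $D'$ and are independent of $L_{g^*}$, so they add no privacy loss. Exponentiation and normalization are post-processing.

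\textbf{Parameter-update step.} Fix the step and condition on $\lambda$ and $\theta_t$. The vectors $u_g$ for $g\ne g^*$ have the same distribution under $D$ and $D'$, and they are independent of $u_{g^*}$. The released quantity $\sum_g u_g+\N(0,\sigma^2I)$ can therefore be written as $u_{g^*}+\N(0,\sigma^2 I)$ plus an independent random shift whose law does not depend on the neighbouring pair. By post-processing (a randomized map applied independently of the data), it suffices to bound the mechanism $u_{g^*}+\N(0,\sigma^2I)$. This is a Gaussian mechanism with sensitivity $\xi_{\sigma,\epsilon_s}(m_{g^*}/n_{g^*})$, applied to a subsample of $D_{g^*}$ at rate $m_{g^*}/n_{g^*}$. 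By the definition of $\xi$ it is $(\alpha,\epsilon^s)$-RDP, and this holds regardless of which group $g^*$ is.

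\textbf{Main obstacle.} The delicate point is that the sampling here is fixed-size without replacement, under replace-one adjacency, whereas $\epsilon_\alpha$ in \cref{app:renyi} must be the bound matching exactly this sampling scheme and adjacency notion. I would check that $\epsilon_\alpha$ is defined for that setting; otherwise the argument would need the corresponding amplification bound. A second check is that the sampling rate for the reweighting batches is $\lfloor\gamma^\ell n_g\rfloor/n_g\le\gamma^\ell$, which requires monotonicity of $\epsilon_\alpha$ in $\gamma$, or that $\epsilon_\alpha$ is defined with the realized rate. Once these checks pass, summing the per-step bounds gives $T\epsilon^s+\lfloor T/k\rfloor\epsilon_\alpha(\gamma^\ell,\tau/\zeta)$.
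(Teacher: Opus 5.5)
Your proposal is correct and is essentially the paper's proof: a per-step $(\alpha,\epsilon^s)$ bound for the model update, a subsampled-Gaussian-plus-post-processing bound for \dpGroupReweight, and adaptive RDP composition over the $T$ and $\lfloor T/k\rfloor$ steps. The model-update bound comes from the definition of $\xi_{\sigma,\epsilon_s}$, since only group $g^*$'s subsampled Gaussian term differs. Your independent-shift post-processing step and your two checks are more careful than the paper, and both checks pass. First, $\epsilon_\alpha$ is built from the without-replacement, replace-one bound of \cref{thm:rdp-subs}; under that adjacency the clipped sum has sensitivity $2\xi$, not $\xi$ as you wrote, and \cref{prop:gauss-rdp} already accounts for this. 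Second, the closed form of $\epsilon_\alpha$ is increasing in $\gamma$, which covers the realized rate $\lfloor\gamma^\ell n_g\rfloor/n_g\le\gamma^\ell$.
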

\begin{proof}[Proof sketch]
  The privacy guarantees for \cref{alg:method} arise from composing guarantees for each step. Firstly, each model update has been designed to give a guarantee of $(\alpha,\epsilon^s)$-RDP.

  If we take two neighbouring datasets $D = (D_g)_{g\in[G]}$ and $D' = (D'_g)_{g\in[G]}$, then they differ in exactly one group, say $\hat{g}$. For all $g \neq \hat{g}$, the update $u_g$ will be the same, whereas it will differ on $\hat{g}$, but the gradient from $D$ and the one from $D'$ will both be clipped to norm $\xi_{\sigma,\epsilon_s}(m_g/n_g)$. By the definition of the function $\xi$, the Rényi divergence between the outputs on $D$ and $D'$ is bounded by $(\alpha,\epsilon^s)$-RDP. This bound holds for all groups $\hat{g}$, and all neighboring datasets.

  Meanwhile, \dpGroupReweight satisfies a standard subsampled Gaussian privacy guarantee $\epsilon_\alpha(\gamma^\ell, \tau/\zeta)$-RDP, based on the sampling and clipping it performs.

  To combine these guarantees, we apply adaptive composition for $T$ steps. Although the algorithm adaptively chooses a mechanism on each step by selecting $m_g, \xi_g$ using $\lambda_g$, which is derived from data-dependent past losses, this composition bound is permissible as the losses are privately released.
\end{proof}

\newcommand{\zhoualg}{
  \begin{wrapfigure}{r}{0.5\textwidth}
  \vspace{-0.5cm}
  \begin{minipage}{0.5\textwidth}
    \begin{algorithm}[H]
  \caption{\zhou \propHighlight{-prop} ~\citep{zhouDifferentiallyPrivateWorstgroup2024}}  \label{alg:zhou}
  \begin{algorithmic}[1]
    \REQUIRE %
    number of iterations $T$,
    batch size $M$, %
    learning rate $\eta$, %
    gradient clipping threshold  $\xi$, %
    noise scale $\sigma^2$, %
    weight update frequency $k$
    \smallskip
    \STATE initialize model parameters $\theta_1\in \Theta$
    \STATE initialize per-group weights $\lambda_g=\frac{1}{G}$ for $g\in[G]$
    \FOR{$t = 1,\dots, T$}
    \STATE select a group $g \sim \categorical(\lambda)$.
    
    \STATE sample a batch $B$ of size $M \propMathHighlight{\cdot \frac{n_g}{N}}$  from $D_g$ uniformly
    \STATE // compute and clip gradients: \\
    $\displaystyle u \leftarrow \!\sum\limits_{z\in B}
  \clip\!\big(\nabla\ell(z,\theta_t), \xi)\big)$
  \STATE // update model parameters: \\
  $\displaystyle \theta_{t+1} \leftarrow  \theta_{t} -
  \frac{\eta}{|B|} \Big( u + \N(0,  \sigma^2 I)    \Big)$  
  
  \IF{$t \operatorname{mod} k = 0$}
  \STATE $\lambda \leftarrow \text{\dpGroupReweight}(\lambda)$
  \ENDIF %
  \ENDFOR %
  \smallskip\ENSURE trained model $\theta_{T+1}$
\end{algorithmic}
\end{algorithm}
\end{minipage}
\vspace{-1.3cm}
\end{wrapfigure} 
}

\section{Relation to Prior Work}

\label{sec:existing_methods}

Several works in the literature have studied the
problem of training models with max-min fairness
using distributionally robust optimization, \eg \citet{sagawaDistributionallyRobustNeural2020,duchiLearningModelsUniform2021}
introduce algorithms and \citet{barocas-hardt-narayanan}
discusses the broader relation to algorithmic fairness.

Learning with differential privacy has found even
more attention in the literature, both in classical machine learning ~\citep{ji2014differentialprivacymachinelearning}, and deep learning \citep{ponomareva2023,el2022differential,demelius2025}.
However,  research at their intersection is still at its infancy.

The prior work most related to ours is
\citet{zhouDifferentiallyPrivateWorstgroup2024},
where three possibilities to perform differentially
private WGO are discussed: two of these are not
applicable to the setting we study, because they
rely on techniques that are impractical for
general-purpose (deep) learning tasks: \emph{output
perturbation}~\citep{chaudhuri2011differentially}
and \emph{private online convex optimization}~\citep{jain2012}.
The third, \texttt{NoisySGD-MGR}, is based on \DPSGD and
therefore readily applicable to deep learning.
We compare it to \acronym conceptually  %
in this section, and experimentally in Section~\ref{sec:experiments}.

Like \acronym, \citet{zhouDifferentiallyPrivateWorstgroup2024}'s
algorithm rely on the general WGO framework seen in \cref{alg:non-dp-wgo}.
In contrast to \acronym, it does not vary the composition of
batches, but instead uses the group weighting $\lambda=(\lambda_g)_{g\in[G]}$ as probabilities in a categorical distribution, to select which group to sample a batch from. It then applies the same, non-adaptive, privatization steps as \DPSGD.

We present this algorithm in \cref{alg:zhou}, with two small changes to align it with standard practice in private deep learning: we introduce clipping on the gradients and loss values, as the original algorithm makes several restrictive assumptions to avoid implementing clipping. Firstly, it assumes that the loss function is both bounded and Lipschitz, and secondly that the algorithm runs projected descent on a bounded space, neither of which align with typical practice. We also add Gaussian noise in \cref{alg:group-loss-reweighting} instead of Laplacian noise, which reduces the tails of the noise distribution, and makes the composition in the privacy analysis simpler. 
\zhoualg
A more significant concern with this algorithm is that it assumes the groups are balanced, \ie have the same number of data points. This assumption is unrealistic in practice, and our privacy analysis of the algorithm (\cref{app:zhou-priv-proof}) shows that the guarantee depends on the size of the smallest group, which is often significantly smaller than the average group size or even the dataset size. To address this, we have also introduced a variant of the algorithm, \zhouprop, that uses proportional sampling to give equal privacy guarantees to all groups, at the cost of smaller batches and increased variance for smaller groups. We include both \zhou and \zhouprop in our experiments.

\section{Sampling Variance Analysis}
\label{sec:variance-comparison}

All of training algorithms we have discussed follow an
SGD-style\footnote{Our algorithm could readily be phrased for other optimizers, such as Adam, but this is beyond the
scope of our work.} scheme, with common elements such as noise added for privacy, and group reweighting. Thus, we can consider their convergence properties jointly by analyzing a generic algorithm (presented formally in \cref{alg:generic-noisy-wgo}) which performs WGO with noise added to both the model updates and the group weight updates. This model does not capture all aspects of these algorithms because clipping is not included. Clipping is a significant added complication, as the group weightings $\lambda$ also affect the relationship between gradient norms and clipping for \lossreweightingShort and \acronym. %
Our theorem, stated informally here, and in full detail in \cref{thm:genericConvergenceThm}, is as follows:

\begin{theorem}[name=Convergence of noisy WGO (informal)]
  Consider the application of a noisy WGO algorithm (\cref{alg:generic-noisy-wgo}) to a convex, bounded $L$-smooth loss function,  and a bounded weight space. Given that at time $t$, with model parameters $\theta_t$ and group weights $\lambda_t$, $U_t^\text{WGO}$ is the non-stochastic update, and the algorithm's unbiased stochastic update is $U_t^{\text{noisy}}$, we assume that the variance $\ex \n U_t^{\text{noisy}} - U_t^\text{WGO} \n_2^2$ is bounded by $v_\mathrm{sample}^2$ uniformly in $t, \theta_t,$ and $\lambda_t$. If the algorithm uses DP noise with variance $\sigma_\text{DP}^2$, it enjoys the following convergence rate, eliding the problem-specific factors:

  \begin{equation}
    \begin{aligned}
      \ex[\max_g \mathcal{L}(D_g, \bar{\theta})] - \min_\theta \max_g \mathcal{L}(D_g, \theta) \leq  &O\left(\frac{v_\mathrm{sample} + \sigma_\text{DP} + \tau \sqrt{ \log (N^2 T)}}{\sqrt{T}}\right)   %
    \end{aligned}
  \end{equation}

\end{theorem}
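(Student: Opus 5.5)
The plan is to view noisy WGO as stochastic mirror descent on the convex–concave saddle-point problem \cref{eq:WGO}. It runs Euclidean projected SGD on $\theta$ and multiplicative-weights (entropic mirror ascent) on $\lambda\in\Delta$. We then bound the duality gap of the averaged iterates $(\bar\theta,\bar\lambda)$ in the style of Nemirovski et al.\ and \citet{zhangStochasticApproximationApproaches2023}. Write $\phi(\theta,\lambda)=\sum_g\lambda_g\mathcal{L}(D_g,\theta)$. Since $\phi$ is linear in $\lambda$, we have $\max_g\mathcal{L}(D_g,\bar\theta)=\max_{\lambda\in\Delta}\phi(\bar\theta,\lambda)$. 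Since $\phi$ is convex in $\theta$, we also have $\min_\theta\max_g\mathcal{L}(D_g,\theta)\ge\min_\theta\phi(\theta,\bar\lambda)$. So the left-hand side of the theorem is at most the expected duality gap $\ex[\max_\lambda\phi(\bar\theta,\lambda)-\min_\theta\phi(\theta,\bar\lambda)]$. By convexity–concavity this is at most $\frac1T\ex\max_{\theta,\lambda}\sum_t[\phi(\theta_t,\lambda)-\phi(\theta,\lambda_t)]$.

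Second, we split this regret into a $\theta$-part and a $\lambda$-part. For $\theta$, the standard projected SGD regret inequality gives $\sum_t\langle\nabla_\theta\phi(\theta_t,\lambda_t),\theta_t-\theta\rangle\le \frac{R_\Theta^2}{2\eta}+\frac{\eta}{2}\sum_t\n\hat g_t\n^2+\sum_t\langle \nabla_\theta\phi-\hat g_t,\theta_t-\theta\rangle$. Here $\hat g_t=U_t^{\text{noisy}}+\N(0,\sigma_\text{DP}^2I)$. By the variance assumption and the gradient bound implied by $L$-smoothness on a bounded domain, $\ex\n\hat g_t\n^2\le G_\theta^2+v_\mathrm{sample}^2+d\sigma_\text{DP}^2$; we absorb dimension into the elided constants. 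For $\lambda$, the entropic mirror-ascent regret gives $\frac{\log G}{\eta^\ell}+\frac{\eta^\ell}{2}\sum\n\hat\ell_t\n_\infty^2$ plus a martingale term. Here $\hat\ell_t$ are the clipped, subsampled, Gaussian-perturbed loss estimates from \dpGroupReweight, rescaled by $k$ to account for updates every $k$ steps.

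Third, we handle the noise terms, which is the main obstacle. The martingale terms are a problem because the comparator $(\theta,\lambda)$ is chosen after the noise, so its expectation is not zero. I would use the ghost-iterate trick: run an auxiliary mirror-descent sequence driven by the noise $\nabla\phi-\hat g_t$. This costs another $\frac{R^2}{2\eta}+\frac\eta2\sum\ex\n\nabla\phi-\hat g_t\n^2$, after which the remaining cross terms are mean zero.

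For $\lambda$, the multiplicative update with Gaussian noise has an unbounded $\n\hat\ell_t\n_\infty$. I would control $\ex\max_{g,t}|\N(0,\tau^2)/|B^\ell_g||^2\lesssim \tau^2\log(GT)$ via a union bound over at most $G\le N$ groups and $T$ rounds. The batch size is at least $1$, and the subsampled-loss variance is bounded through $n_g\le N$. This is the source of the $\tau\sqrt{\log(N^2T)}$ factor. Alternatively, one can condition on a high-probability event where all noise is at most $\tau\sqrt{2\log(N^2T)}$ and pay a negligible failure contribution using boundedness of the loss.

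Finally, we choose $\eta\propto R_\Theta/\sqrt{T(G_\theta^2+v_\mathrm{sample}^2+\sigma_\text{DP}^2)}$ and $\eta^\ell\propto\sqrt{\log G}/\sqrt{T(\zeta^2+\tau^2\log(N^2T))}$. Dividing by $T$ gives the rate $O\big((v_\mathrm{sample}+\sigma_\text{DP}+\tau\sqrt{\log(N^2T)})/\sqrt T\big)$, up to problem constants.
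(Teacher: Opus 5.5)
Your proof is correct, and its skeleton matches the paper's. You use the same reduction of the optimality gap to the averaged regret $\frac1T\max_{\theta,\lambda}\sum_t[\phi(\theta_t,\lambda)-\phi(\theta,\lambda_t)]$. You split it the same way, into a projected-SGD term for $\theta$ against $\theta^\star(\bar\lambda)$ and a Hedge term for $\lambda$, and you balance step sizes the same way. The differences lie in how you handle the two noise issues.

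\emph{Comparator that depends on the noise.} The paper does not use ghost iterates. It writes $\langle U_t-\tilde U_t,\theta_t-\theta^\star(\bar\lambda)\rangle$ as a mean-zero part plus $-\langle U_t-\tilde U_t,\theta^\star(\bar\lambda)\rangle$, and bounds the latter by Cauchy--Schwarz with $\|\theta^\star(\bar\lambda)\|\le D$. This is more elementary, but it gives the stated $D\sqrt{T}v$ only if Cauchy--Schwarz is applied to the summed martingale $\sum_t(U_t-\tilde U_t)$ and orthogonality of its increments is used. Applied term by term, as the paper writes it, it only gives $DTv$. Your ghost-iterate argument costs a second $\frac{R^2}{2\eta}+\frac{\eta}{2}\sum_t\mathbb{E}\|\cdot\|^2$. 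In return it produces the $\sqrt{T}$ automatically and carries over verbatim to the entropic geometry. So it also handles the post-hoc $\max_\lambda$ in the Hedge term, which the paper leaves implicit by deferring to Zhou et al.

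\emph{Unbounded Gaussian perturbations.} The paper takes what you list as your alternative. It conditions on the event that all noisy losses are at most $A=1+\sqrt{2\tau^2\log(GT^2)}$, which has probability at least $1-1/T$. It then applies the bounded-loss Hedge bound with $\eta_\ell=\sqrt{\log G/(TA^2)}$ and pays $1/T$ on the complement, using that the loss is bounded. This is where the $\log(\cdot\,T)$ factor genuinely comes from. Your primary route is cleaner. The bound $\frac{\log G}{\eta^\ell}+\frac{\eta^\ell}{2}\sum_t\|\hat\ell_t\|_\infty^2$ holds for arbitrary real gains, by applying Hoeffding's lemma round by round. You can therefore take expectations directly, and you never reason about the noise conditional on a high-probability event, under which it is no longer exactly mean-zero.

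On your primary route the union bound over $t$ is superfluous. The expected maximum of $G$ squared Gaussians is already $O(\tau^2\log G)$, so $\log(N^2T)$ is just a loose upper bound there.
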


As a result, the clearest differentiation between algorithms on the same problem is their sampling variance $v_\mathrm{sample}^2$, which aligns with prior work on both traditional \citep{ghadimi2013stochastic} and information-theoretic \citep{NEURIPS2021_cb77649f} convergence guarantees for SGD-like algorithms.
In \cref{thm:sampling-variances}, we present a precise characterisation of the sampling variances for each algorithm, at an arbitrary timestep $t$, where the model has parameters $\theta_t$ and group weights $\lambda_t$. We decompose these terms, using the current per-group sampling variance as $\var_{g} := \ex_{z\sim D_g}\big\| \nabla_\theta\ell(z, \theta_t) - U(D_g) \big\|_2^2$, and the inter-group discrepancy $\|U(D_g) - U_t^\text{WGO}\|_2^2$. As is common in convergence analsyis \citep{koloskovaRevisitingGradientClipping2023,9683272}, we assume there are bounds on these quantities,  $v^2$ and $\Delta$, which are independent of time, parameters, groups, and weightings, which allow us to  give simpler bounds for easier comparison:

\begin{theorem}[name=Comparison of sampling variances,store=samplingVarianceThm]
  \label{thm:sampling-variances}
  Given these definitions, we can express the sampling variance of the algorithms as follows:
  \begin{align}
    \label{eq:our-sampling-var-in-theorem}
    \var(U^{\text{\acronym}}) &= \frac1M \sum_{g \in [G]}\lambda_g \frac{n_g - m_g}{n_g - 1} \var_g
    &&\leq \frac1M v^2.
    \intertext{In contrast, \zhou has sampling variance:}
    \label{eq:zhou-bass-sampling-var-in-theorem}
    \var(U^{\text{\zhou}})
    &=  \sum_{g \in [G]}\lambda_g \left(\frac1M\frac{n_g - M}{n_g - 1} \var_g +  \|U(D_g) - U_t^\text{WGO}\|_2^2  \right)
    &&\leq \frac{1}{M}  v^2 + \Delta,
    \intertext{\zhouprop, which samples a batch of size $M_g = M n_g /N$ for group $g$, has sampling variance:}
    \label{eq:zhou-prop-sampling-var-in-theorem}
    \var(U^{\text{\zhouprop}})
    &= \sum_{g \in [G]} \lambda_g \left(\frac{1}{M_g}\frac{n_g - M_g}{n_g - 1} \var_g +   \|U(D_g) - U_t^\text{WGO}\|_2^2  \right)
    &&\leq \frac{1}{\min_g M_g}  v^2 + \Delta,
  \end{align}
  Finally, the variance of \lossreweighting's sampling scheme defies easy characterization using $\Delta$, as its updates are distorted by the rescaling applied to the losses:
  \begin{align}
    \label{eq:reweight-sampling-var-in-theorem}
    \var(U^{\text{\lossreweightingShort}}) &= \frac{1}{M}\frac{N - M}{N - 1} \left( \sum_{g \in [G]} N \frac{\lambda_g^2}{n_g} \var_g + \sum_{g \in [G]} \frac{n_g}{N} \left\| \frac{N \lambda_g}{n_g} U(D_g) - U^{\text{WGO}} \right\|_2^2 \right) \\
    &\leq \frac{1}{M} \frac{N}{\min_g n_g} v_\text{group}^2 + \frac{1}{M} \sum_{g \in [G]} \frac{n_g}{N} \left\| \frac{N \lambda_g}{n_g} U(D_g) - U^{\text{WGO}} \right\|_2^2.
  \end{align}
\end{theorem}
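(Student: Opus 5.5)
The plan is to treat each sampling scheme as a (mixture of) sampling without replacement from finite populations and apply the finite-population variance formula. For a simple random sample $B$ of size $m$ drawn without replacement from a population of size $n$ with vectors $x_i$, mean $\mu$ and population variance $s^2 = \frac1n\sum_i\|x_i-\mu\|_2^2$, the sample mean satisfies $\ex\|\frac1m\sum_{i\in B}x_i - \mu\|_2^2 = \frac1m\frac{n-m}{n-1}s^2$. This follows coordinatewise from the covariance of inclusion indicators, $\cov(\mathbb{1}_i,\mathbb{1}_j) = -\frac{m(n-m)}{n^2(n-1)}$ for $i\neq j$. Throughout we identify $U^{\text{WGO}}_t = \sum_g \lambda_g U(D_g)$ and take every update to be normalised as an estimator of this quantity, with DP noise and clipping ignored.

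\emph{ASC.} The update is $\frac1M\sum_g\sum_{z\in B_g}\nabla\ell(z,\theta_t)$ with $m_g=\lambda_g M$. The batches $B_g$ are drawn independently, so the variance is the sum of the per-group variances. Each group term is $\frac{m_g^2}{M^2}\cdot\frac1{m_g}\frac{n_g-m_g}{n_g-1}\var_g = \frac{\lambda_g}{M}\frac{n_g-m_g}{n_g-1}\var_g$. The bound $\frac1M v^2$ then follows from $\frac{n_g-m_g}{n_g-1}\le 1$ (since $m_g\ge1$), $\var_g\le v^2$, and $\sum_g\lambda_g=1$. We would gloss over the rounding of $m_g$, or note that fair rounding keeps the estimator unbiased.

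\emph{\zhou\ and \zhouprop.} Here the law of total variance conditions on the chosen group $g\sim\categorical(\lambda)$. Given $g$, the update is a without-replacement sample mean of size $M$ (respectively $M_g$) from $D_g$, so its conditional variance is $\frac1M\frac{n_g-M}{n_g-1}\var_g$. The conditional mean is $U(D_g)$, and the variance of the conditional mean about $U^{\text{WGO}}$ is $\sum_g\lambda_g\|U(D_g)-U^{\text{WGO}}\|^2$. Summing the two parts gives \cref{eq:zhou-bass-sampling-var-in-theorem,eq:zhou-prop-sampling-var-in-theorem}. The upper bounds use $\var_g\le v^2$, $\|U(D_g)-U^{\text{WGO}}\|^2\le\Delta$, and $1/M_g\le 1/\min_g M_g$.

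\emph{\lossreweightingShort.} The update is a uniform without-replacement sample of size $M$ from the whole of $D$ of the reweighted vectors $x_z = \frac{N\lambda_{g(z)}}{n_{g(z)}}\nabla\ell(z,\theta_t)$. Their population mean is exactly $U^{\text{WGO}}$, so the finite-population formula gives $\frac1M\frac{N-M}{N-1}s^2$ with $s^2 = \frac1N\sum_z\|x_z-U^{\text{WGO}}\|^2$. We then decompose $s^2$ within and between groups by adding and subtracting the group mean $\frac{N\lambda_g}{n_g}U(D_g)$ of the $x_z$ over group $g$. The cross terms vanish, the within-group term is $\frac1N\sum_g n_g\frac{N^2\lambda_g^2}{n_g^2}\var_g = \sum_g N\frac{\lambda_g^2}{n_g}\var_g$, and the between-group term is as displayed. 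For the inequality we use $\frac{N-M}{N-1}\le1$ and $\lambda_g^2\le\lambda_g$, so that $\sum_g N\lambda_g^2\var_g/n_g \le \frac{N}{\min_g n_g}v^2$.

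The main obstacle is bookkeeping rather than depth. We must make the normalisations consistent: the unbiasedness of each estimator for $U^{\text{WGO}}$, and the use of $1/|B|$ versus $1/M$. We also have to handle integer rounding of $m_g$ and $M_g$, for which we would assume exact integrality.
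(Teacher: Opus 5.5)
Your proposal is correct and follows the paper's proof essentially step for step. The ingredients are the same: the finite-population correction for without-replacement sampling, independence of the per-group centered sums for \acronym, the law of total (co)variance over the categorical group choice for \zhou and \zhouprop, and a within/between-group decomposition of the reweighted population for \lossreweightingShort. The only small difference is that you derive the correction from inclusion-indicator covariances, whereas the paper solves for the pairwise cross-covariance by noting that a full batch ($m=n$) has zero covariance. Your bounds ($\frac{n_g-m_g}{n_g-1}\le 1$, $\lambda_g^2\le\lambda_g$, $\frac{N-M}{N-1}\le 1$) and your assumption that $m_g=\lambda_g M$ is an exact integer match what the paper implicitly uses.
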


Comparing the results in \Cref{thm:sampling-variances} (see \cref{app:sampling_variance_proof} for the proof),
one sees that \acronym has the simplest expression:
its sampling variance is simply an importance-weighted
combination of the individual within-group variances. An analog term appears for both \zhou and \zhouprop (with slight scaling differences as all methods use without-replacement sampling), but they also contain an additional term in $\Delta$ for the disparity between the group-specific update and the overall WGO update, which does not shrink as the batch size increases. This term is not present in \acronym because it samples from all groups in each step, and thus the sampling variance is only due to the within-group variance, and not the between-group variance.
This suggests that \acronym exhibits more
stable convergence behavior than \zhou and \zhouprop. \lossreweightingShort also has a more complex variance expression, due to its use of rescaling, though it does decrease as the batch size increases. We show experimental evidence of the differences in sampling variance in \Cref{sec:results_variance}.

\section{Experimental Evaluation}\label{sec:experiments}
\begin{table*}[t]
  \caption{Worst-case group accuracy (\WGA) and average group accuracy (\AVG) on the Unbalanced MNIST, CelebA and Bank Fraud (base variant) datasets. Bold text marks the best results in each column as well as any other methods that fall within one standard deviation of the best method.
  }
  \label{tab:headline_combined}
  \begin{adjustbox}{center}
    \begin{tabular}{lllllll}
\toprule
 & \multicolumn{2}{c}{Unbalanced MNIST} & \multicolumn{2}{c}{CelebA} & \multicolumn{2}{c}{Bank Fraud} \\
 & \WGA & \AVG & \WGA & \AVG & \WGA & \AVG \\
\midrule
\acronym (ours) & $\mathbf{75.2 \pm {\scriptstyle 1.2}}$ & $\mathbf{86.3 \pm {\scriptstyle 0.2}}$ & $\mathbf{64.4 \pm {\scriptstyle 2.9}}$ & $\mathbf{75.9 \pm {\scriptstyle 0.9}}$ & $\mathbf{46.3 \pm {\scriptstyle 2.0}}$ & $\mathbf{57.8 \pm {\scriptstyle 0.5}}$ \\
\lossreweightingShort & $62.8 \pm {\scriptstyle 2.2}$ & $74.7 \pm {\scriptstyle 0.5}$ & $28.3 \pm {\scriptstyle 0.6}$ & $72.5 \pm {\scriptstyle 0.3}$ & $\mathbf{44.9 \pm {\scriptstyle 2.4}}$ & $56.2 \pm {\scriptstyle 1.5}$ \\
DPSGD & $20.4 \pm {\scriptstyle 3.3}$ & $84.8 \pm {\scriptstyle 0.3}$ & $16.7 \pm {\scriptstyle 0.8}$ & $67.8 \pm {\scriptstyle 0.3}$ & $0.9 \pm {\scriptstyle 0.5}$ & $50.0 \pm {\scriptstyle 0.1}$ \\
DPSGD-F & $5.5 \pm {\scriptstyle 1.6}$ & $81.6 \pm {\scriptstyle 0.9}$ & $8.8 \pm {\scriptstyle 3.5}$ & $50.9 \pm {\scriptstyle 1.3}$ & $3.8 \pm {\scriptstyle 1.0}$ & $49.5 \pm {\scriptstyle 0.3}$ \\
\zhou & $1.3 \pm {\scriptstyle 0.2}$ & $13.1 \pm {\scriptstyle 0.8}$ & $4.6 \pm {\scriptstyle 4.1}$ & $51.2 \pm {\scriptstyle 1.1}$ & $39.9 \pm {\scriptstyle 1.9}$ & $53.7 \pm {\scriptstyle 1.4}$ \\
\zhouprop & $7.5 \pm {\scriptstyle 2.4}$ & $22.0 \pm {\scriptstyle 1.7}$ & $34.0 \pm {\scriptstyle 7.8}$ & $64.2 \pm {\scriptstyle 2.7}$ & $35.7 \pm {\scriptstyle 4.3}$ & $54.3 \pm {\scriptstyle 1.3}$ \\
\bottomrule
\end{tabular}

  \end{adjustbox}
\end{table*}

In this section, we report on experimental results of our proposed \acronym method, the naive approach, \lossreweighting, \DPSGD, \DPSGD-F, and two variants of \zhou, as discussed in \cref{sec:existing_methods}. \DPSGD-F \citep{xuRemovingDisparateImpact2021} adaptively adjusts the clipping threshold of each group to reduce the disparate impact of \DPSGD on per-group accuracy.
We use three datasets for evaluation: Unbalanced MNIST, CelebA, which are both very commonly used in the literature on private and/or fair learning, as well as the Bank Account Fraud dataset. Unbalanced MNIST has had class 8 downsampled, to simulate under-represented classes, whereas the CelebA dataset (where we predict \emph{male/female}) has a significant imbalance in photos of blond and non-blond male and female subjects, so we take these as the groups. Finally, the Bank Account Fraud dataset is a large-scale tabular dataset for binary classification of \emph{fraud/not fraud}. The groups are defined as the pairwise combinations of \emph{fraudulent}/\emph{not fraudulent} with \emph{age under/over 50}, as it is known that this demographic characteristic affects the likelihood of fraudulent behavior. We give details on these datasets and our experimental setup in \cref{app:experiments}, including the hyperparameters used for each method and dataset.

In all cases, we aim for models that are $(\epsilon=1,\delta=\frac{1}{2n})$-DP,
for which the privacy accounting library internally constructs a suitable RDP level
and determines the noise strengths for all methods. Note that for \zhou, setting a batch size of $512$ on Unbalanced MNIST must be reduced to $474$ as the smallest group (class 8) only has $474$ examples, and we cannot oversample.
\begin{figure*}[t]\addtolength{\abovecaptionskip}{-2pt}
  \begin{center}
    \includegraphics{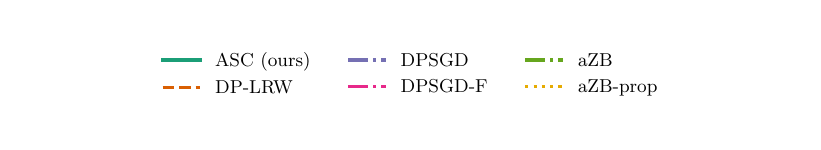}
    \vspace{-2.75em}
  \end{center}
  \begin{minipage}[t]{.653\linewidth}\centering
    \includegraphics{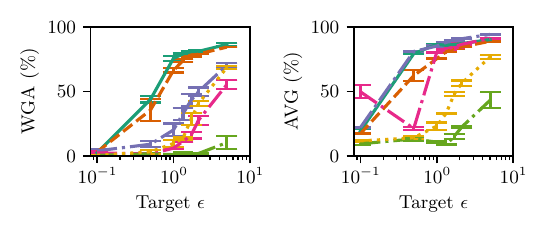}
    \vspace{-1.75em}

    \caption{Results on Unbalanced MNIST dataset for varying levels of privacy (x-axis).
      Left: worst-case group accuracy (\WGA). Right: average group accuracy (\AVG).
    }
    \label{fig:wga_eps}
    \vfill
  \end{minipage}
  \hfill
  \begin{minipage}[t]{.323\linewidth}\addtolength{\abovecaptionskip}{-8pt}
    \centering
    \includegraphics{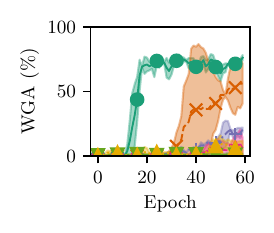}
    \caption{Validation set WGA during training on Unbalanced MNIST, $(\epsilon=1,\delta=\frac{1}{2n})$. }%
    \label{fig:wga_variance_mnist}
  \end{minipage}
  \vspace{-1em}
\end{figure*}

\myparagraph{Average and Worst-case Group Accuracy:} \label{sec:results_accuracy}
To evaluate the different methods, we measure the quality of the resulting model in terms of their worst-case group accuracy (WGA), as well as their average-across-groups accuracy (AVG) on held-out test data, given in \Cref{tab:headline_combined}.

Firstly, we have similar results for \DPSGD as in prior work~\citep{bagdasaryan2019differential}, showing that it achieves reasonable average accuracy ($84.8\%$ on Unbalanced MNIST, $67.8\%$ on CelebA), but very poor worst-case group accuracy ($20.4\%$ on Unbalanced MNIST, $16.7\%$ on CelebA). However, on the Bank Fraud dataset, \DPSGD performs poorly, with an average accuracy of $50\%$, equal to chance, and a worst-case group accuracy of only $0.9\%$, as it predicts almost all inputs as not fraudulent.

We also report the first experimental results on the algorithm of \citet{zhouDifferentiallyPrivateWorstgroup2024}, with mixed results. On Unbalanced MNIST, both \zhou and \zhouprop perform worse than DPSGD, though on CelebA \zhouprop does improve over DPSGD, and on Bank Fraud both improve significantly ($+35\%$) over DPSGD. These issues likely arise from the sampling strategy of \zhou and \zhouprop, which allow for convergence proofs, but are not so well suited to practical use. \zhou samples the same number of points in total as DPSGD, but in such a way that the amplification by subsampling is $N/\min_g n_g$ times weaker, requiring a much higher noise multiplier. \zhouprop avoids this issue by sampling smaller groups, and so learns from many fewer datapoints than the other private methods.

However, the results of \lossreweighting already show a much more positive situation, achieving \WGA improvements over DPSGD for all datasets, with positive effect on the average accuracy (except on Unbalanced MNIST, where it loses $\sim 10\%$).

Finally, we can see that \acronym improves over all other algorithms on all three datasets, achieving the best \WGA and \AVG in all cases. The improvements are particularly strong on the CelebA dataset, where \acronym achieves a \WGA $30\%$ higher than the next best method.

\myparagraph{Varying privacy levels:} \label{sec:results_priv_level}
In \cref{fig:wga_eps}, we study the effect of changing the privacy level, on a range from $\epsilon=0.1$ to $\epsilon=5$. All models perform poorly at the stronger epsilon, but  we can see improved performance as $\epsilon$ increases for all methods. \acronym and \lossreweighting maintain strongest, and second strongest, respectively \wga performance across all $\epsilon$ values.
Notably, we can also see much stronger performance from DPSGD-F as $\epsilon$ increases above 1, which is more in line with the results of the original paper \citep{xuRemovingDisparateImpact2021}, which trains at $\epsilon=6.55$.
This provides further evidence that \acronym is the most practical algorithm for training max-min fair classifiers with differential privacy across a range of privacy budgets.

\begin{figure*}
  \centering
  \includegraphics{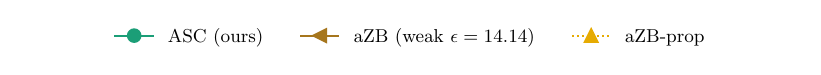}
  \vspace{-1.5em}

  \includegraphics{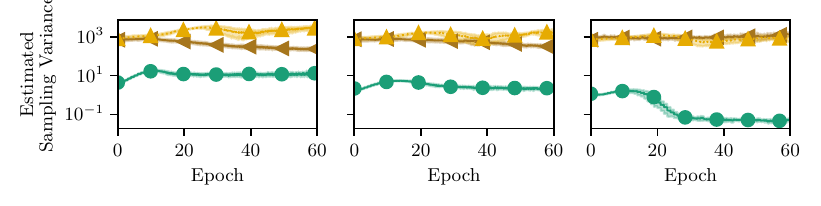}
  \vspace{-2em}

  \begin{subcaptiongroup}
    \centering
    \parbox[t]{.1\textwidth}{%
      \centering  \phantom{spacer}%
    }%
    \parbox[t]{.3\textwidth}{%
    \centering \caption{Batch size $M=128$}}%
    \parbox[t]{.275\textwidth}{%
    \centering \caption{Batch size $M=256$}}%
    \parbox[t]{.3\textwidth}{%
    \centering \caption{Batch size $M=474$}}%
  \end{subcaptiongroup}
  \caption{Estimated sampling variance of each method based on \cref{eq:our-sampling-var-in-theorem,eq:zhou-bass-sampling-var-in-theorem,eq:zhou-prop-sampling-var-in-theorem}
    Shaded regions indicate the standard deviation over 10 runs. Each run was repeated at 3 different batch sizes, 128, 256, and 474 (the size of the smallest group). $\epsilon=2.12$ for \acronym and \zhouprop, $\epsilon=14.14$ for \zhou.
  }\label{fig:sampling_variance_for_algs}
\end{figure*}

\myparagraph{Results -- Variance.}\label{sec:results_variance}
Empirical evaluations confirm our theoretical analysis (\cref{sec:variance-comparison}) that \acronym achieves significantly lower sampling variance than \zhou and \zhouprop. On the Unbalanced MNIST dataset (\Cref{fig:sampling_variance_for_algs}), \acronym exhibits an approximate 100-fold reduction in sampling variance compared to the baselines, which also decreases as the batch size increases. In contrast, the estimated variance for the \zhou variants fails to decrease with larger batches because their inter-group term (\cref{eq:zhou-bass-sampling-var-in-theorem,eq:zhou-prop-sampling-var-in-theorem}) is independent of batch size. Consequently, \acronym demonstrates more stable training dynamics with tighter uncertainty regions, as seen in \cref{fig:wga_variance_for_sampling_variance_runs} for the same three methods, and more generally in \cref{fig:wga_variance_mnist}.

\section{Conclusion and future directions}

In this work, we studied private worst-group optimization, in order to train models that are private and have high accuracy for all demographic groups within a dataset. Our new baseline, \lossreweightingShort, showed itself to be a simple and effective improvement over DPSGD, with the drawback that its sampling scheme has high variance introduced by the rescaling of the gradients.

However, our newly proposed algorithm, \acronym, is able to achieve stronger performance in our experiments by using a sampling scheme that mixes groups together in each batch using adaptive stratified sampling, while still ensuring that the privacy guarantees are consistent across all groups by adapting the clipping thresholds.

Our convergence analysis identified the variance of the update steps as a clear differentiator between algorithms, and we also showed that \acronym's sampling scheme has lower variance than those of existing methods. In the future, we plan to extend this analysis to include the effect of clipping.

Of course, there are some fundamental limitations caused by privacy that limit the contribution of very small groups to the model updates, as we discussed in \cref{sec:dp-def}, though we did not encounter this in practice, as our algorithm and also other private worst-group optimization algorithms were still able to significantly improve over DPSGD.

Overall, we believe that the societal impacts of this work are positive, as it provides a practical method for training fair and private machine learning models, and we do not foresee any negative consequences arising specifically from this work.
In future work, we plan to generalize the mechanism to other forms of distributionally robust optimization, \eg in the context of multi-task or meta-learning, where multiple models are learned,
and tasks can vary greatly in terms of their dataset sizes and desired privacy guarantees.

\section*{Acknowledgements}
This research was funded in whole or in part by the Austrian Science Fund (FWF) 10.55776/COE12.

This research was supported by the Scientific Service Units (SSU) of the Institute of Science and Technology Austria through resources provided by Scientific Computing (SciComp).

We acknowledge EuroHPC Joint Undertaking for awarding us access to MareNostrum5 at BSC, Spain.

AS acknowledges the Novo Nordisk Foundation for support via the Startup grant (NNF24OC0087820) and VILLUM
FONDEN via the Young Investigator program (72069).

\bibliography{strings,bib_automatic_from_zotero,bib_manual_entries}

\newpage
\appendix

\crefalias{section}{appendix}

\section{Renyi DP}
\label{app:renyi}
 
\begin{algorithm}[t]
  \caption{DPSGD~\citep{abadi2016deep}}\label{alg:dp-sgd}
  \begin{algorithmic}[1]
    \REQUIRE dataset $D$, number of iterations $T$, batch size $M$,
    learning rate $\eta$, clipping threshold  $\xi$, noise scale $\sigma^2$
    \smallskip\STATE initialize model parameters $\theta_1\in \Theta$
    \FOR{$t = 1,\dots, T$}
    \STATE sample a batch $B$ of size $M$ from $D$ uniformly %
    \STATE compute gradients and clip their length: \\
    $$u_{t} \leftarrow \sum_{z\in B}
    \operatorname{clip}\big(\nabla\ell(z,\theta_t), \xi\big)$$

    \STATE update model parameters and add noise:\\
    $$\theta_{t+1} \leftarrow \theta_{t} -
    \frac{\eta}{M} \Big( u_t + \mathcal{N}(0,\sigma^2 \text{I}) \Big)$$
    \ENDFOR
    \smallskip\ENSURE trained model $\theta_{T+1}$ %
  \end{algorithmic}
\end{algorithm}
\begin{definition}[Rényi Divergence, \citep{renyiMeasuresEntropyInformation1961}]
  The Rényi divergence of order $\alpha > 1$ between two probability distributions $P$ and $Q$ is
  \begin{equation}
    D_\alpha (P \| Q) = \frac{1}{\alpha-1} \log \ex_{x \sim Q} \left(
      \frac{P(x)}{Q(x)}
    \right)^{\alpha}
  \end{equation}

\end{definition}

Using this divergence, \citet{mironovRenyiDifferentialPrivacy2017} defines a new form of privacy as follows:
\begin{definition}
  [$(\alpha,\epsilon)$-RDP]
  A randomized mechanism $\mech$ that takes an input dataset $S$ and outputs values $o \in \mathcal{O}$ is said to be $\epsilon$-Rényi differentially private at order $\alpha$ if, for all neighbouring $D \sim D'$ it holds that
  \begin{equation}
    D_\alpha (\mech(D) \| \mech(D')) \leq \epsilon.
  \end{equation}

\end{definition}

We introduce the RDP guarantee for the Gaussian mechanism applied to a sum, under our neighbouring relation:

\begin{proposition}[RDP guarantee for the Gaussian mechanism (Proposition 7 \cite{mironovRenyiDifferentialPrivacy2017})]
\label{prop:gauss-rdp}
Define $f(D)=\sum_{z \in D} \clip(z, C)$, where $\clip$ bounds the $\|\cdot\|_2$ norm of $z$ to less than or equal to $C$. Under the 'replace one' neighboring definition, the sensitivity of this function is $2C$.
A Gaussian mechanism $\mech(D) = f(D) + \mathcal{N}(0, \sigma^2 I)$ is defined to have noise multiplier $\kappa:=\sigma/C$ (using the same notation as other papers) and a RDP privacy guarantee given by:

\begin{equation}
    \epsilon^\text{Gauss}_\alpha(\kappa) = \frac{\alpha}{2\cdot (\kappa/2)^2}
\end{equation}

\end{proposition}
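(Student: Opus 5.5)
The Gaussian mechanism proposition (Proposition 7 of Mironov) has two claims. First, the sum-of-clipped-terms function has $\ell_2$-sensitivity $2C$ under the replace-one relation. Second, the RDP curve of the Gaussian mechanism with noise multiplier $\kappa=\sigma/C$ is $\alpha/(2(\kappa/2)^2)$. I will prove them in that order and then combine them.

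\textbf{Sensitivity.} Let $D$ and $D'$ differ by replacing a single element $z^*\in D$ with $z'\in D'$, possibly inside one group; the group structure plays no role here. All other summands cancel, so
\[
f(D)-f(D')=\clip(z^*,C)-\clip(z',C).
\]
By the triangle inequality and $\|\clip(x,C)\|_2\le C$, this gives $\|f(D)-f(D')\|_2\le 2C$. The bound is attained by antiparallel vectors of norm at least $C$, so $2C$ is the exact sensitivity.

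\textbf{Gaussian RDP.} Next I compute the Rényi divergence between $P=\mathcal N(\mu,\sigma^2 I)$ and $Q=\mathcal N(\mu',\sigma^2 I)$ directly. Writing out the densities, $\ex_{x\sim Q}(P(x)/Q(x))^\alpha$ is the integral of $p^\alpha q^{1-\alpha}$. After completing the square, the exponent becomes a Gaussian with mean $\alpha\mu+(1-\alpha)\mu'$ plus a constant term $\alpha(\alpha-1)\|\mu-\mu'\|_2^2/(2\sigma^2)$. The Gaussian part integrates to one, which leaves
\[
D_\alpha(P\|Q)=\frac{\alpha\|\mu-\mu'\|_2^2}{2\sigma^2}.
\]
The problem is one-dimensional after rotating onto the direction $\mu-\mu'$, since the orthogonal coordinates have identical marginals and contribute nothing. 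This algebra is the only real computation in the proof.

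\textbf{Combining.} The divergence is increasing in $\|\mu-\mu'\|_2$. I take the worst case $\|\mu-\mu'\|_2=2C$ from the sensitivity step, which gives
\[
\frac{\alpha(2C)^2}{2\sigma^2}=\frac{\alpha}{2(\sigma/(2C))^2}=\frac{\alpha}{2(\kappa/2)^2}.
\]
Because this bound is uniform over all neighbouring pairs, including pairs differing within any group $g^*$, the mechanism satisfies the RDP definition.

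The step most prone to error is the bookkeeping of the factor $2$. It comes from the replace-one relation, which gives sensitivity $2C$, in contrast to add/remove, which gives $C$ and would yield Mironov's original $\alpha/(2\kappa^2)$. Getting that factor right is the main thing to check.
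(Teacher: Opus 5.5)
Your proof is correct and takes the same route the paper relies on. The paper gives no separate argument: it applies Mironov's Proposition 7 (a Gaussian mechanism with $\ell_2$-sensitivity $\Delta$ is $(\alpha,\alpha\Delta^2/(2\sigma^2))$-RDP) with $\Delta=2C$ from the replace-one relation, and your completing-the-square computation of $D_\alpha$ makes that cited step self-contained, with a factor of $2$ that agrees with the form $\epsilon_j=\frac{2j}{\beta^2}$ the paper later substitutes into the subsampling bound.
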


And these guarantees can also be easily converted to $(\epsilon,\delta)$-DP. Note that we also follow the common practice of tracking the RDP guarantees of a mechanism at multiple orders ($\alpha$), and take the best resulting $(\epsilon,\delta)$-DP guarantee (measured by lowest $\epsilon$).

\begin{proposition} [Conversion of RDP to $(\epsilon,\delta)$-DP (Proposition 3 \cite{mironovRenyiDifferentialPrivacy2017}]
\label{prop:conv-rdp-eps}
  If $\mech$ is $(\alpha,\epsilon)$-RDP, then it satisfies
  $\left(\epsilon + \frac{\log(1/\delta)}{\alpha-1}, \delta \right)$ for any $\delta \in (0,1)$.
\end{proposition}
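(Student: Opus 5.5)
The plan is to prove the statement for a fixed pair of neighbouring datasets $D, D'$ and a fixed measurable event $O\subset\mathcal{O}$. Write $P=\mech(D)$ and $Q=\mech(D')$. The argument has two steps: a Hölder-type ``probability preservation'' bound, then a two-case split. Set $\epsilon' := \epsilon + \frac{\log(1/\delta)}{\alpha-1}$; the goal is $P(O)\le e^{\epsilon'}Q(O)+\delta$.

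\emph{Step 1: Hölder bound.} Since $D_\alpha(P\|Q)\le\epsilon<\infty$ with $\alpha>1$, $P$ is absolutely continuous with respect to $Q$. This is a small measure-theoretic point worth stating: if $P$ charged a $Q$-null set, the $\alpha$-moment defining $D_\alpha$ would be infinite. Hence $P(O)=\ex_{x\sim Q}\big[\tfrac{P(x)}{Q(x)}\mathbf{1}_O(x)\big]$. Apply Hölder's inequality with conjugate exponents $\alpha$ and $\alpha/(\alpha-1)$:
\begin{equation*}
P(O)\le \Big(\ex_{Q}\Big[\Big(\tfrac{P}{Q}\Big)^{\alpha}\Big]\Big)^{1/\alpha} Q(O)^{(\alpha-1)/\alpha}\le e^{\epsilon(\alpha-1)/\alpha}\,Q(O)^{(\alpha-1)/\alpha}.
\end{equation*}
The last inequality uses the definition of $(\alpha,\epsilon)$-RDP, namely $\ex_Q[(P/Q)^\alpha]=e^{(\alpha-1)D_\alpha(P\|Q)}\le e^{(\alpha-1)\epsilon}$.

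\emph{Step 2: case split.}
\begin{itemize}
\item If $P(O)\le e^{\epsilon'}Q(O)$, the claim holds trivially.
\item Otherwise $Q(O)<e^{-\epsilon'}P(O)$. Substituting this into the bound from Step 1 gives
\begin{equation*}
P(O)\le e^{\epsilon(\alpha-1)/\alpha}\,e^{-\epsilon'(\alpha-1)/\alpha}\,P(O)^{(\alpha-1)/\alpha}.
\end{equation*}
The exponents simplify: $e^{(\epsilon-\epsilon')(\alpha-1)/\alpha}=e^{-\log(1/\delta)/\alpha}=\delta^{1/\alpha}$. Hence $P(O)^{1/\alpha}\le\delta^{1/\alpha}$, i.e.\ $P(O)\le\delta$. (If $P(O)=0$ there is nothing to prove.)
\end{itemize}
In both cases $P(O)\le e^{\epsilon'}Q(O)+\delta$.

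Since $D,D'$ and $O$ were arbitrary, this yields $(\epsilon',\delta)$-DP for every $\delta\in(0,1)$.

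The main obstacle is Step 1: one must choose the Hölder exponents so that the RDP moment appears exactly. Once that bound is in place, the case split is routine algebra, and the specific value of $\epsilon'$ is exactly what makes the second case collapse to $P(O)\le\delta$.
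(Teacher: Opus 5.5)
Your proof is correct. It follows the standard argument for this result: the paper gives no proof of its own and only cites Mironov's Proposition 3, which is proved by the same H\"older-based probability-preservation bound $P(O)\le \bigl(e^{\epsilon}Q(O)\bigr)^{(\alpha-1)/\alpha}$ followed by a two-case split (Mironov splits on the size of $Q(O)$ relative to a power of $\delta$ rather than on whether $P(O)\le e^{\epsilon'}Q(O)$, which is only a cosmetic difference).
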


\citet{wangSubsampledRenyiDifferential2018} proves (Theorem 9) gives privacy guarantees for the application of a $(\alpha,\epsilon)$-RDP mechanism $\mech$ to a batch of size $m$ sampled without replacement from a full dataset of size $n$, \ie giving an effective sampling rate $m/n$:

\begin{theorem} [RDP for subsampled mechanisms, \citet{wangSubsampledRenyiDifferential2018}[Theorem 9]]
\label{thm:rdp-subs}
  Given a dataset of $n$ points drawn from a domain $\mathcal{X}$ and a mechanism $\mech: \mathcal{X}^m \to \mathcal{O}$ for a batch size  $m \leq n$, we define the subsampled mechanism $mech\circ\operatorname{subsample}$.
  First, this mechanism subsamples a batch of size $m$ without replacement and then applies $\mech$ to that batch.
  Let the sampling rate be  $\gamma = m/n$.

  For all integers $\alpha \geq 2$, if $\mech$ obeys $(\alpha,\epsilon_\alpha)$-RDP,  then this new randomized algorithm $\mech \circ \operatorname{subsample}$ obeys $(\alpha,\epsilon'(\alpha))$-RDP where,

  \begin{align*}
    \epsilon^\text{subs}_\alpha(\epsilon_{(\cdot)},\gamma)  = \frac{1}{\alpha-1}\log\bigg( 1
      +  \gamma^2{\alpha \choose 2} \min\Big\{ 4(e^{\epsilon_2}-1), e^{\epsilon_2} \min\{2, (e^{\epsilon_\infty}-1)^{2} \} \Big\}& \\
      +   \sum_{j=3}^{\alpha} \gamma^j {\alpha \choose j} e^
    {(j-1)\epsilon_j} \min\{2, (e^{\epsilon_\infty}-1)^{j}\}& \bigg).
  \end{align*}
\end{theorem}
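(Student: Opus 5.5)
The plan is to follow the moment-expansion argument of \citet{wangSubsampledRenyiDifferential2018}: write both subsampled output distributions as mixtures that share a common component, expand the Rényi moment binomially in $\gamma$, and bound each higher-order term using the RDP of $\mech$ at the matching order.

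\textbf{Step 1: common mixture decomposition.} Fix neighbouring $D, D'$ of size $n$ that differ only in the element at index $i^*$. Sampling a batch of size $m$ without replacement contains $i^*$ with probability exactly $\gamma=m/n$. Coupling batches on the same index set gives
\begin{align*}
\mech\circ\operatorname{subsample}(D) &= (1-\gamma)P_0 + \gamma P, \\
\mech\circ\operatorname{subsample}(D') &= (1-\gamma)P_0 + \gamma Q.
\end{align*}
Here $P_0$ is the output law conditioned on $i^*\notin B$, which is identical for $D$ and $D'$. $P$ and $Q$ are the laws conditioned on $i^*\in B$, and are mixtures over the same batches $B\ni i^*$, with the element at $i^*$ being either the $D$-version or the $D'$-version. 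A batch $B\not\ni i^*$ can also be mapped to batches containing $i^*$ by swapping one element in. This lets us write $P_0$ as a mixture of laws $\mech(B')$ with $B'$ adjacent to the batches in $P$ and $Q$, which is what makes the later comparisons to $P_0$ possible.

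\textbf{Step 2: binomial expansion.} Write $Q' = (1-\gamma)P_0+\gamma Q$. Then
\[
\ex_{Q'}\Big[\Big(\tfrac{(1-\gamma)P_0+\gamma P}{Q'}\Big)^\alpha\Big] = \ex_{Q'}\Big[\Big(1+\gamma\tfrac{P-Q}{Q'}\Big)^\alpha\Big].
\]
For integer $\alpha$ we expand binomially. The $j=0$ term is $1$, and the $j=1$ term vanishes because $\ex_{Q'}[(P-Q)/Q'] = 0$. This leaves
\[
1+\sum_{j\ge 2}\gamma^j\binom{\alpha}{j}\,\ex_{Q'}\Big[\Big(\tfrac{P-Q}{Q'}\Big)^j\Big],
\]
so the whole task is to bound the ternary moments $\ex_{Q'}|(P-Q)/Q'|^j$.

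\textbf{Step 3: bounding the moments (main obstacle).} The map $(a,b,c)\mapsto |a-b|^j/c^{j-1}$ is jointly convex. Using the coupled mixture representation from Step 1, this reduces the moment to an average over pairs or triples of batches that are adjacent inputs to $\mech$. For each such triple we argue as follows.

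\emph{Denominator replacement:} we replace $Q'$ by a single component law $\mech(B'')$ and pay a factor $e^{(j-1)\epsilon_j}$. This uses the RDP bound of order $j$ through $\ex_{R}[(S/R)^j]\le e^{(j-1)\epsilon_j}$.

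\emph{Numerator bounds:} we write $|P-Q|\le |P-R|+|R-Q|$. This gives the crude factor $2$ (from $|a-b|^j\le$ a sum of powers, bounded via RDP). Alternatively, pointwise $|P/R-1|\le e^{\epsilon_\infty}-1$ gives the factor $(e^{\epsilon_\infty}-1)^j$.

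For $j=2$ we use instead the $\chi^2$ identity. Writing $P-Q=(P-R)-(Q-R)$ and using $(a-b)^2\le 2a^2+2b^2$ gives $\ex[(P-Q)^2/R^2]\le 4(e^{\epsilon_2}-1)$, since $\ex_R[(P/R)^2]-1 = e^{\epsilon_2}-1$. Taking the minimum of the available bounds yields exactly the coefficients in the statement, and applying $\frac{1}{\alpha-1}\log$ finishes the proof.

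I expect the delicate part to be justifying the denominator replacement. $Q'$ is a mixture, not a single adjacent output, so we must check that the joint-convexity reduction pairs each numerator batch with a denominator batch that is adjacent under the replace-one relation. That pairing is what the coupling in Step 1 is designed to guarantee.
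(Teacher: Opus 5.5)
The paper does not prove this result. It imports it from \citet{wangSubsampledRenyiDifferential2018}, and your architecture is the architecture of their proof: common-component mixture, binomial expansion with vanishing linear term, ternary $|\chi|^j$ moments, and joint convexity over coupled batches. The coupling you flagged as delicate is in fact unproblematic. Let $J$ be a uniform $(m-1)$-subset of the common points and $y$ a uniform remaining common point. Set $P_J=\mech(J\cup\{x\})$ and $Q_J=\mech(J\cup\{x'\})$, so that $P=\ex_J P_J$, $Q=\ex_J Q_J$ and $P_0=\ex_{J,y}\mech(J\cup\{y\})$ share the same $J$-marginal. Writing $(P,Q,Q')=(1-\gamma)(P,Q,P_0)+\gamma(P,Q,Q)$, joint convexity bounds $\int|P-Q|^j/Q'^{j-1}$ by averages of $\int|P_J-Q_J|^j/R^{j-1}$ with $R\in\{\mech(J\cup\{y\}),Q_J\}$. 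All three laws in each such triple are pairwise adjacent. This reduction to a single-component denominator costs nothing; in particular it does not cost $e^{(j-1)\epsilon_j}$.

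The genuine gap is in the numerator bounds. This is exactly where the stated constants come from, and your inequalities as written do not produce them.

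\emph{The factor $2$.} Routing through $R$ via $|P-Q|\le|P-R|+|R-Q|$ and $(u+v)^j\le 2^{j-1}(u^j+v^j)$ gives at best $2^{j}e^{(j-1)\epsilon_j}$, not $2e^{(j-1)\epsilon_j}$. The factor $2$ instead needs the pointwise bound $|P-Q|\le\max\{P,Q\}$ for nonnegative densities. Then $|P-Q|^j\le P^j+Q^j$ and $\int P^j/R^{j-1}=e^{(j-1)D_j(P\|R)}\le e^{(j-1)\epsilon_j}$, and likewise for $Q$.

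\emph{The factor $(e^{\epsilon_\infty}-1)^j$.} Bounding $|P/R-1|$ and $|Q/R-1|$ by $e^{\epsilon_\infty}-1$ gives $|P-Q|\le 2(e^{\epsilon_\infty}-1)R$. That yields the coefficient $2^j(e^{\epsilon_\infty}-1)^j$ with no Rényi factor, which is not the stated one. You need the ratio of the two numerator laws, which are outputs on adjacent batches: $|P-Q|=Q\,|P/Q-1|\le(e^{\epsilon_\infty}-1)Q$, followed by $\int Q^j/R^{j-1}\le e^{(j-1)\epsilon_j}$.

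\emph{Where $e^{(j-1)\epsilon_j}$ enters.} It is the Rényi moment paid after the numerator has been dominated by $P^j$ or $Q^j$, not a price for replacing the denominator. If it were the latter, the first $j=2$ branch would read $4e^{\epsilon_2}(e^{\epsilon_2}-1)$. Your $\chi^2$ bound for $j=2$ is correct precisely because it skips that cost, though it should read $\ex_R[(P/R)^2]-1\le e^{\epsilon_2}-1$ rather than an equality. With these corrections, taking the minimum does give exactly the coefficients in the statement.
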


\section{Further details for \lossreweightingShort \cref{alg:dp-loss-reweighting}}

\begin{lemma}[Privacy guarantees of \cref{alg:dp-loss-reweighting}]
  \label{lem:dp-loss-reweighting-privacy}
  \cref{alg:dp-loss-reweighting} has a privacy guarantee stated in Renyi DP given by:  $(\alpha,
  \epsilon_\text{\lossreweightingShort})$ given by
  \begin{equation}
    \epsilon_\text{\lossreweightingShort} =  
    T \cdot \epsilon_\alpha\left( \frac{M}{N}, \frac{\sigma}{\xi}\right)
    +
    \frac{T}{k} \cdot \epsilon_\alpha\left(\gamma^\ell, \frac{\tau}{\zeta}\right),
  \end{equation}
  which corresponds exactly to the privacy guarantee of $T$ steps of DPSGD with sampling rate $M/N$ and noise multiplier $\sigma/\xi$, and $T/k$ steps of the subsampled Gaussian mechanism with sampling rate $\gamma^\ell$ and noise multiplier $\tau/\zeta$.
\end{lemma}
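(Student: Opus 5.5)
The proof is an adaptive-composition argument. I will view \cref{alg:dp-loss-reweighting} as an adaptive sequence of $T$ model-update mechanisms interleaved with $\lfloor T/k\rfloor$ calls to \dpGroupReweight. Each model-update mechanism takes the current public state $(\theta_t,\lambda)$ and the private dataset $D$, and releases $\theta_{t+1}$. Each reweighting call takes $(\theta_t,\lambda)$ and $D$, and releases the new $\lambda$. The state $(\theta_t,\lambda)$ is a function only of previously released outputs. Adaptive composition of RDP (Mironov, Proposition 1) therefore applies, and the total guarantee at order $\alpha$ is the sum of the per-mechanism guarantees. It then suffices to bound each of the two kinds of mechanism separately, uniformly over the adaptively chosen state.

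\textbf{Model updates.} Fix $\theta_t$ and $\lambda$. I will define the per-example map $h_{\theta_t,\lambda}(z)=\lambda_{g(z)}N/n_{g(z)}\cdot\nabla\ell(z,\theta_t)$. The update $u$ is then $\sum_{z\in B}\clip(h(z),\xi)$, which is exactly the DPSGD sum of \cref{alg:dp-sgd} applied to the transformed vectors $h(z)$.

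The key observation is that the weighting sits \emph{inside} the clip. Consequently every point contributes a vector of norm at most $\xi$, whatever $\lambda$ and the group sizes are. Under the replace-one relation of \cref{sec:dp-def}, neighbouring datasets differ in one point of one group $g^*$. Because $n_g=n'_g$, the map $h$ is identical on all shared points, so the sensitivity of the batch sum is $2\xi$, as in \cref{prop:gauss-rdp}. The batch is drawn uniformly without replacement at rate $M/N$ from the whole of $D$, independently of group labels. \cref{thm:rdp-subs} therefore applies and gives $\epsilon_\alpha(M/N,\sigma/\xi)$, the same quantity as for one DPSGD step.

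\textbf{Reweighting steps.} Each call to \dpGroupReweight samples, in every group, $\lfloor\gamma^\ell n_g\rfloor$ points, clips their losses to $\zeta$, and adds $\N(0,\tau^2)$ noise to each $L_g$. On neighbouring datasets only $L_{g^*}$ can differ; the other coordinates have identical distributions and are independent. The released vector $(L_g+\text{noise})_g$ is thus a product in which only one factor changes. By additivity of Rényi divergence over independent products, its divergence equals that of the single subsampled Gaussian $L_{g^*}$. That mechanism has sampling rate approximately $\gamma^\ell$ and noise multiplier $\tau/\zeta$, giving $\epsilon_\alpha(\gamma^\ell,\tau/\zeta)$. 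The normalisation and exponentiation are post-processing.

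\textbf{Main obstacle.} The delicate point is the reweighting step's use of the nominal rate $\gamma^\ell$, when the actual rate is $\lfloor\gamma^\ell n_g\rfloor/n_g\le\gamma^\ell$. I would handle this with monotonicity of $\epsilon_\alpha$ in the sampling rate, which follows from the bound in \cref{thm:rdp-subs}. A second point to check is that the $\lambda$-dependence of $h$ does not leak privacy. This holds because $\lambda$ is a released output, hence admissible as an adaptive choice. Summing the two bounds over $T$ updates and $\lfloor T/k\rfloor\le T/k$ reweightings yields the stated $\epsilon_\text{\lossreweightingShort}$.
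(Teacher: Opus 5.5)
Your proposal is correct and follows the same route as the paper. Because clipping is applied after the $\lambda_{g}N/n_{g}$ rescaling, each model update is a subsampled Gaussian mechanism at rate $M/N$ with noise multiplier $\sigma/\xi$, each reweighting step is a subsampled Gaussian at rate $\gamma^\ell$ with multiplier $\tau/\zeta$, and adaptive RDP composition sums the costs. Your extra care over the per-group product structure of the released losses, the rounded rate $\lfloor\gamma^\ell n_g\rfloor/n_g\le\gamma^\ell$ (handled by monotonicity of the bound in the sampling rate), and $\lfloor T/k\rfloor\le T/k$ fills in details that the paper's short proof leaves implicit.
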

\begin{proof}
  This guarantee follows from the privacy guarantees of the subsampled Gaussian mechanism, where we note that clipping is applied to the term including the reweighting $\lambda_g N/n_g$, so the noise multiplier is $\sigma/\xi$ regardless of the value of $\lambda_g$ or $n_g$. Thus, each model update step has a privacy guarantee of $\epsilon_\alpha(M/N, \sigma/\xi)$-RDP. Each update of $\lambda$ has a privacy guarantee of $\epsilon_\alpha(\gamma^\ell, \tau/\zeta)$-RDP. To combine these guarantees, we apply adaptive composition of these mechanisms. 
\end{proof}

\section{Further details for \acronym \cref{alg:method}}

\label{app:our-method-details}

\paragraph{Notation for RDP guarantees} In \cref{sec:asc-our-method}, we introduced the functions $\epsilon_{\alpha}(\gamma,\beta)$ and $\xi_{\sigma,\epsilon_s}(m_g/n_g)$, which we now define in more detail, using \cref{prop:gauss-rdp,thm:rdp-subs}:

\begin{align}
    \epsilon_\alpha(\gamma, \beta) \,=\; 
      &\epsilon^\text{subs}_\alpha(\epsilon^\text{Gauss}_{(\cdot)}(\beta),\gamma). \\
      \intertext{
      Expanding both definitions, and noting that the Gaussian mechanism is not private at order $\alpha=\infty$, this gives:
      } \\
    =\; &\frac{1}{\alpha-1}\log\bigg( 1
      +  \gamma^2{\alpha \choose 2} \min\Big\{ 4(e^{\frac{2 \cdot 2}{\beta^2}}-1), 2e^{\frac{2\cdot 2}{\beta^2}} \Big\} 
      +   \sum_{j=3}^{\alpha} \gamma^j {\alpha \choose j} 2 e^
    {(j-1)\frac{2j}{\beta^2}}   \bigg).
\end{align}
 
This function is then implemented by the privacy accountant library. The inverse function $\xi_{\sigma,\epsilon_s}(\gamma)$ is then defined by

\begin{equation}
    \xi_{\sigma,\epsilon_s}(\gamma) = \max\left\{ \kappa: \epsilon_\alpha(\gamma, \sigma/\kappa) \leq \epsilon_s \right\},
\end{equation}
and implemented using Brent's method. A suitable bracket $(\kappa_l, \kappa_h)$ is found by doubling/halving candidate values until they contain a root of the function $\kappa \mapsto \epsilon_\alpha(\gamma,\kappa') - \epsilon$. Even if this does not find a perfect minimizer, it will give a valid $\kappa$ value, in the sense of satisfying $(\alpha,\epsilon)$-RDP, as the lower half of the bracket is always returned. As noted in the main text, $\epsilon_\alpha$ is a continuous function, and an lower bracket can always be found, by solving for the clipping threshold in a scenario without subsampling.

\paragraph{Rounding}

In line \ref{line:meth:round-batch-sizes} of \cref{alg:method}, the per-group batch sizes are normalized and rounded, with the three aims that: the proportions stay the same, all the batch sizes are integer, and their sum is $M$. All three requirements cannot be jointly satisfied, so the exact proportionality must be relaxed. First, we rescale the batch sizes to sum up to $M$, and round to the nearest integer (using bankers' rounding). If the resulting sum is not $M$, we randomly distribute the difference uniformly among the elements in the array. Of course, if it is necessary to decrease one or more batch sizes, the random distribution only selects among the non-zero batch sizes. Note that if the desired per-group batch size is larger than the group size, we do not oversample, and reduce it.

\paragraph{Scaling to large numbers of groups}

In this work, we have considered scenarios where the number of groups $G$ is small, up to 10. $G$ has relatively limited effect on training, as the function $\xi$ is precomputed for all possible batch sizes for each group at the start of training, which is linear in $G$, but not in $T$, the number of training steps. Each model step is effectively constant in $G$, with the same complexity as \DPSGD by performing all operations in parallel across the batch, and the same applies to the group reweighting steps. Of course, as $G$ increases, the sampling behaviour of \acronym becomes more complex, and in extreme cases we would consider an extension similar to Federated Learning or Random Allocation \citep{shenfeld2025privacy} where each group only contributes to a subset of training steps.

\paragraph{Privacy guarantees}

\begin{proof}[Proof of \cref{thm:method_dp}]
  \Cref{alg:method} performs $T$ rounds of model updates and $\lfloor T/k\rfloor$
  updates of the per-group batch sizes.
  We first establish
  the privacy of each of these steps in isolation.

  \begin{lemma} \label{lem:model_update_RDP}
    At each step $t\in\{1,\dots,T\}$, the model update step of \Cref{alg:method} is $(\alpha,\epsilon^s)$-RDP.
  \end{lemma}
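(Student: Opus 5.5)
We fix a step $t$ and condition on everything released before it, namely the current parameters $\theta_t$ and weights $\lambda$. By adaptive composition these may be treated as fixed public inputs. In particular the batch sizes $m=(m_g)_g$ are then deterministic functions of public quantities: the random tie-breaking in the rounding uses no data. So are the clipping thresholds $\xi_g:=\xi_{\sigma,\epsilon_s}(m_g/n_g)$, since the group sizes $n_g$ are identical for neighbouring datasets under our replace-one-within-a-group relation. The model update is then the map $D\mapsto \sum_g u_g(D_g)+\N(0,\sigma^2 I)$. Releasing $\theta_{t+1}$ is a fixed affine post-processing of this output, so by post-processing it suffices to bound the Rényi divergence of this noisy sum.

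Next we take neighbours $D,D'$ differing only in group $\hat g$. For $g\neq\hat g$ the sub-mechanisms sample from identical datasets $D_g=D'_g$, with independent randomness, so $S:=\sum_{g\neq\hat g}u_g$ has the same distribution under $D$ and $D'$. The output is $u_{\hat g}(D_{\hat g})+\N(0,\sigma^2I)+S$. Conditioning on $S=s$, the mechanism equals the subsampled Gaussian mechanism $\mech_{\hat g}(D_{\hat g})=u_{\hat g}(D_{\hat g})+\N(0,\sigma^2I)$, shifted by $s$. Its divergence is therefore that of $\mech_{\hat g}$. Batch size $m_{\hat g}$ is drawn without replacement from $n_{\hat g}$ points and clipped at $\xi_{\hat g}$, so \cref{prop:gauss-rdp} and \cref{thm:rdp-subs} give
\[
D_\alpha\big(\mech_{\hat g}(D_{\hat g})\,\|\,\mech_{\hat g}(D'_{\hat g})\big)\le \epsilon_\alpha\!\left(\frac{m_{\hat g}}{n_{\hat g}},\frac{\sigma}{\xi_{\hat g}}\right)\le\epsilon^s .
\]
The last inequality holds by the definition of $\xi_{\sigma,\epsilon_s}$ as the largest $\kappa$ with $\epsilon_\alpha(\gamma,\sigma/\kappa)\le\epsilon_s$, together with the bracket-returning implementation.

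To remove the conditioning on $S$, we use joint convexity of $\exp((\alpha-1)D_\alpha)$ in the pair of distributions, equivalently the fact that adding an independent random variable with the same law on both sides is post-processing. Concretely, the map $x\mapsto x+S$ with independent $S$ is a Markov kernel applied to both $\mech_{\hat g}(D_{\hat g})$ and $\mech_{\hat g}(D'_{\hat g})$. The data-processing inequality for Rényi divergence then bounds the divergence of the full outputs by $\epsilon^s$. Since $\hat g$ and the neighbouring pair were arbitrary, the step is $(\alpha,\epsilon^s)$-RDP.

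I expect the main obstacle to be justifying that the group-$\hat g$ contribution is a valid instance of the subsampled Gaussian bound when the noise is shared across groups rather than added per group. This is where the Markov-kernel/data-processing argument does the work. A secondary subtlety is the edge cases. When $m_{\hat g}=0$, $u_{\hat g}\equiv 0$ and the divergence is trivially $0$. When $m_{\hat g}$ is capped at $n_{\hat g}$, $\gamma=1$, and the no-subsampling solution $\xi=\sqrt{\sigma^2\epsilon^s/(2\alpha)}$ still satisfies the bound.
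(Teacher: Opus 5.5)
Your proof is correct and follows the same route as the paper's. You isolate the group $\hat g$ in which the neighbours differ, note that its contribution is a subsampled Gaussian mechanism with rate $m_{\hat g}/n_{\hat g}$, clipping threshold $\xi_{\sigma,\epsilon_s}(m_{\hat g}/n_{\hat g})$ and noise $\sigma^2$, and then invoke the defining property of that threshold. You also spell out steps the paper leaves implicit: the other groups' contributions form an independent additive term with the same law under $D$ and $D'$, so the shared-noise output is a post-processing of $\mech_{\hat g}$ and the data-processing inequality transfers the bound. The same goes for conditioning on the data-independent batch sizes and for the edge cases $m_{\hat g}\in\{0,n_{\hat g}\}$.
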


  \begin{proof}
    Let $D=(D_g)_{g\in[G]}$ and $D'=(D'_g)_{g\in[G]}$ be two neighboring datasets.
    Let $g$ be the group in which the difference between both sets occur, \ie
    $D_g$ and $D'_g$ are neighboring, and so differ on exactly one point, and all other group datasets are identical.
    The model update step on $D_g$ consists of a \emph{subsampled Gaussian mechanism}
    with sampling rate $\gamma_g=\frac{m_g}{n_g}$, clipping threshold $\xi=\frac{\sigma}{\kappa_\alpha(\gamma_g,\epsilon^s)}$ and noise strength $\sigma^2$.
    By definition of $\kappa_{\alpha}$, the Rényi divergence at order $\alpha$ between the mechanism's output on $D$ and $D'$ is bounded by $\epsilon^s$, meaning that the mechanism fulfills $(\alpha,\epsilon^s)$-RDP, as this property holds for all groups $g$, and so all neighboring datasets.
  \end{proof}

  \begin{lemma}\label{lem:batchsize_update_RDP}
    At each step $t=lk$ for $l\in\{1,\dots\lfloor\frac{T}{k}\rfloor\}$,
    the batch size update step of \Cref{alg:method} is $\epsilon_\alpha(\gamma^\ell,\tau/\zeta)$-RDP.
  \end{lemma}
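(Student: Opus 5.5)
To prove \cref{lem:batchsize_update_RDP}, I would treat one call of \dpGroupReweight (\cref{alg:group-loss-reweighting}) as a subsampled Gaussian mechanism acting on the group that contains the differing point.

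First, the public part. At step $t=lk$ the current weights $\lambda$ and the parameters $\theta_t$ are outputs of earlier private mechanisms, so by adaptive composition they can be regarded as fixed, public inputs. The group sizes $n_g$ are also public under the neighbouring relation of \cref{sec:dp-def}. Consequently the batch sizes $\lfloor\gamma^\ell n_g\rfloor$ do not depend on the data. The final normalization $\tilde\lambda\mapsto\tilde\lambda/\|\tilde\lambda\|_1$ and the maps $L_g+\N(0,\tau^2)\mapsto\tilde\lambda_g$ are data-independent post-processing. It therefore suffices to bound the RDP of releasing the vector $(L_g+\N(0,\tau^2))_{g\in[G]}$.

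Next, fix neighbouring $D,D'$ that differ only in group $\hat g$. For every $g\neq\hat g$, the batch $B^\ell_g$ and the sum $L_g$ have identical distributions under $D$ and $D'$. These coordinates also carry independent noise, so they contribute zero divergence, and the joint Rényi divergence reduces to that of the $\hat g$ coordinate alone. That coordinate is a scalar sum of losses clipped to $\zeta$, computed on a batch of size $\lfloor\gamma^\ell n_{\hat g}\rfloor$ drawn without replacement from $D_{\hat g}$, with $\N(0,\tau^2)$ added. By \cref{prop:gauss-rdp}, the unsubsampled mechanism has noise multiplier $\tau/\zeta$. \Cref{thm:rdp-subs} then bounds the subsampled mechanism by $\epsilon_\alpha(\gamma,\tau/\zeta)$ with $\gamma=\lfloor\gamma^\ell n_{\hat g}\rfloor/n_{\hat g}$. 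Since $\lfloor\gamma^\ell n_{\hat g}\rfloor/n_{\hat g}\le\gamma^\ell$ and $\epsilon_\alpha$ is nondecreasing in the sampling rate (every coefficient in the series expression is nonnegative), this is at most $\epsilon_\alpha(\gamma^\ell,\tau/\zeta)$. The bound holds for every $\hat g$, so it holds for all neighbouring pairs.

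The main obstacle is the interaction between the clip and the sensitivity convention. The loss is clipped with $\clip(\cdot,\zeta)$, which for a nonnegative loss gives values in $[0,\zeta]$. The replace-one sensitivity is therefore really $\zeta$, not the $2\zeta$ assumed in \cref{prop:gauss-rdp}. I would simply note that using $2\zeta$ is conservative, so the claimed bound still holds. A second point to check is that the factorization across groups requires independent sampling and noise per group, which \cref{alg:group-loss-reweighting} provides.
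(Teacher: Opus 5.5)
Your proposal is correct and follows the same route as the paper. The only access to private data in the reweighting step is the noisy release of clipped per-group loss sums, which is a subsampled Gaussian mechanism with rate $\gamma^\ell$ and noise multiplier $\tau/\zeta$, and everything else is post-processing. Your extra steps fill in details the paper's short proof leaves implicit: tensorizing across groups so that only the affected group $\hat g$ contributes, using monotonicity of $\epsilon_\alpha$ in the sampling rate to handle $\lfloor\gamma^\ell n_{\hat g}\rfloor/n_{\hat g}\le\gamma^\ell$, and noting that the $2\zeta$ sensitivity convention is conservative.
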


  \begin{proof}
    The batch size update accesses private data only in the computation
    of the losses (line 13). There, it applies a standard subsampled Gaussian
    mechanism with sampling rate $\gamma^\ell$, clipping threshold $\zeta$
    and noise scale $\tau^2$, resulting in a privacy guarantee of
    $\epsilon_\alpha(\gamma^\ell, \tau/\zeta)$-RDP. The remaining steps preserve
    this privacy level due to RDP's postprocessing property.
  \end{proof}

  \emph{Proof of \Cref{thm:method_dp} -- continued.}
  Apart from its own access to the private data, each
  step of \Cref{alg:method} relies only on values that
  were either specified as public inputs, or that were
  computed previously in a private manner. Consequently,
  we obtain its overall privacy guarantees using the
  adaptive composition property of Rényi-DP~\citep{mironovRenyiDifferentialPrivacy2017}.
  Because the algorithm executes $T$ model update steps, each guaranteed to be $(\alpha,\epsilon^s)$-RDP, and $\lfloor\frac{T}{k}\rfloor$ reweighting
  steps, each of which is $(\alpha,\epsilon_\alpha(\gamma^\ell,\tau/\zeta))$-RDP, the statement
  of \Cref{thm:method_dp} follows.
\end{proof}

\section{Privacy of \cref{alg:zhou}}
\label{app:zhou-priv-proof} 
\begin{proposition}[name=Privacy guarantees for \cref{alg:zhou} and its variants]
  \label{prop:zhou-priv}
  \Cref{alg:zhou} has a privacy guarantee stated in Renyi DP given by:  $(\alpha,
  \epsilon_\text{\zhou})$ given by
  \begin{equation}
    \epsilon_\text{\zhou} =  
    T \cdot \epsilon_\alpha\left( \boldsymbol{\frac{M}{\min_g n_g}}, \frac{\sigma}{\xi}\right)
    +
    \frac{T}{k} \cdot \epsilon_\alpha\left(\gamma^\ell,\frac{\tau}{\zeta}\right),
  \end{equation}

  The proportional variant \zhouprop of \cref{alg:zhou} has RDP guarantees $(\alpha,
  \epsilon_\text{\zhouprop})$ given by
  \begin{equation}
    \epsilon_\text{\zhouprop} =
    T \cdot \epsilon_\alpha\left(\boldsymbol{\frac{M}{N}}, \frac{\sigma}{\xi}\right)
    +
    \frac{T}{k} \cdot \epsilon_\alpha\left(\gamma^\ell, \frac{\tau}{\zeta}\right),
  \end{equation}
  matching the guarantees of standard \DPSGD, where we sample a batch of size $M$ from the whole dataset (size $N$), with of course, an added term for the reweighting of $\lambda$.
\end{proposition}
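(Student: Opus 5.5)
The plan is to mirror the proof of \cref{thm:method_dp}: establish a per-step RDP bound for the model update of \cref{alg:zhou} (in both variants), take the reweighting guarantee from \cref{lem:batchsize_update_RDP}, and combine them by adaptive composition.

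\emph{Reweighting steps.} The \dpGroupReweight call in \cref{alg:zhou} is the same mechanism analysed in \cref{lem:batchsize_update_RDP}. It is a subsampled Gaussian mechanism with rate $\gamma^\ell$, clipping $\zeta$ and noise $\tau^2$, applied within the differing group, followed by postprocessing. It is therefore $(\alpha,\epsilon_\alpha(\gamma^\ell,\tau/\zeta))$-RDP, and it is invoked at most $T/k$ times.

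\emph{Model update, \zhou.} Fix neighbouring $D,D'$ that differ only in group $\hat g$. At step $t$, conditional on $\lambda_t$, the output is a mixture over the selected group $g\sim\categorical(\lambda_t)$. The weights $\lambda_t$ were released privately and are treated as public at this step.
\begin{itemize}
\item If $g\neq\hat g$, the batch and the clipped sum are identically distributed under $D$ and $D'$.
\item If $g=\hat g$, the step is a subsampled Gaussian mechanism on $D_{\hat g}$: batch size $M$ without replacement from $n_{\hat g}$ points, clipping $\xi$, noise $\sigma^2$. By \cref{prop:gauss-rdp} and \cref{thm:rdp-subs}, its Rényi divergence is at most $\epsilon_\alpha(M/n_{\hat g},\sigma/\xi)$.
\end{itemize}
To pass to the mixture, I would use joint quasi-convexity of the Rényi divergence. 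Both mixtures share the same weights $\lambda_g$, and the components differ only when $g=\hat g$, so the divergence of the mixture is at most the maximum component divergence, which is $\epsilon_\alpha(M/n_{\hat g},\sigma/\xi)$. Since $\epsilon_\alpha(\gamma,\beta)$ is increasing in $\gamma$ (every term in the formula of \cref{app:our-method-details} is increasing in $\gamma$), maximising over the unknown $\hat g$ gives the worst case $\epsilon_\alpha(M/\min_g n_g,\sigma/\xi)$.

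\emph{Model update, \zhouprop.} Group $g$ uses batch size $Mn_g/N$, so the sampling rate is $M/N$ for every group. The same argument therefore gives $\epsilon_\alpha(M/N,\sigma/\xi)$ uniformly in $\hat g$, with no minimum over groups.

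\emph{Composition.} Each step uses only public inputs and previously privately released quantities ($\lambda$ and $\theta$). Adaptive sequential composition of RDP at fixed order $\alpha$ then sums the per-step bounds, giving $T\epsilon_\alpha(\cdot,\sigma/\xi)+\frac{T}{k}\epsilon_\alpha(\gamma^\ell,\tau/\zeta)$ with the respective sampling rates $M/\min_g n_g$ and $M/N$.

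\emph{Main obstacle.} The hardest step is handling the random group selection rigorously. Applying quasi-convexity requires conditioning on $\lambda_t$ and treating the index $g$ as either internal randomness or part of the output. If $g$ is released, each branch is bounded directly. If it is not released, quasi-convexity (or simply postprocessing of the release of $(g,\text{update})$) gives the bound. I would argue via the second route: releasing $g$ together with the update and then postprocessing.

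A minor further point is the normalisation $\eta/|B|$. It is a public deterministic rescaling because $|B|$ depends only on $g$ and the public $n_g$, so it does not affect the bound.
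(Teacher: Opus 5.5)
Your proposal is correct and follows the same route as the paper's proof. The paper also bounds each model-update step by the worst case in which the differing group $\hat g$ is selected: rate $M/n_{\hat g}$, maximised at $M/\min_g n_g$ for \zhou, and rate $M/N$ for \zhouprop. It then adds $\epsilon_\alpha(\gamma^\ell,\tau/\zeta)$ per reweighting and combines everything by adaptive RDP composition. Your extra steps make explicit what the paper leaves informal as ``taking the worst case'': quasi-convexity (or release $g$, then postprocess) for the group selection, monotonicity of $\epsilon_\alpha$ in $\gamma$, and the public rescaling by $\eta/|B|$.
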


\begin{proof}

  Consider two neighboring datasets $D=(D_g)_{g\in[G]}$ and $D'=(D'_g)_{g\in[G]}$, and let $g$ be the group in which the difference occurs, so that $D_g$ and $D'_g$ are neighboring, and all other groups' datasets are identical.

  Each model update step applies noise with variance $\sigma^2$ to gradients clipped to stay within a  $\xi$-radius $L_2$ ball, giving a noise multiplier of $\sigma/\xi$.
  In every step, it is possible that group $g$ is selected, but the likelihood thereof is unknown, as the sampling weights $\lambda_g$ could take any value, and also change over time . For a DP proof, we must take the worst case, so we will account for the privacy cost of $g$ being chosen.
  If so, the the subsampling probability from $D_g$ is then $\gamma_g=\frac{M}{n_g}$ for \zhou. As the privacy bound should hold uniformly over groups, the worst-case subsampling probability is therefore $\gamma=\frac{M}{\min_g n_g}$. For \zhouprop, the sampling probability is  $\gamma=\frac{M}{N}$ by cancelling the proportionality terms in the batch size definition.
  Thus, we suffer a privacy loss of $\epsilon_\alpha(\gamma, \sigma/\xi)$ for each of the $T$ steps, regardless of how many times group $g$ actually is sampled.

  For the reweighting of $\lambda_g^{(t)}$, the losses on all datapoints are released evey $k$ steps, using a standard subsampled Gaussian mechanism with sampling rate $\gamma^\ell$, noise multiplier $\tau/\zeta$, giving cost $\epsilon_\alpha(\gamma^\ell,\tau/\zeta)$ for each of these $T/k$ steps (where we assume, for the sake of simplicity, that this evenly divides).

  Summed together, the RDP costs of these steps gives the guarantees above for \zhou, and \zhouprop.

\end{proof}

\subsection{Discussion on \zhou}
\label{app:discuss-zhou}

\Cref{sec:experiments} discusses the fact that \zhou's performance is very low across all datasets. This is largely a result of the mismatch between the algorithm's design and its use on datasets with unequal group sizes.
Imbalanced groups are a natural phenomenon, as \eg it is expected that a small proportion of the population commits bank fraud, even though high accuracy on this group is actually very important. 

We can see the problem in more detail if we turn to the the CelebA dataset, which has four groups, ranging in size from $1387$ to $71629$ images.
For all methods, we used a consistent batch size $b=256$ (and also tried a smaller $b=36$ specifically for \zhou to help address this problem), giving a sampling rate $\gamma_\text{min} = 256/1387=0.186$ for the minority group. On the other hand \DPSGD would use the full dataset size, over $162,000$, as the denominator, which leads to a significantly smaller sampling rate and thus better privacy protection. The noise multiplier $\sigma/\xi$ is automatically calculated, like in Opacus \citep{yousefpourOpacusUserFriendlyDifferential2022}, to give an $\epsilon=1$ guarantee for all groups. The result, $\sigma/\xi =570$, (respectively $\sigma/\xi =214$ for $b=36$), is over $100$ (resp. $40$) times higher than the noise multiplier for DPSGD, $5.08$, or that for our method and \zhouprop, $5.59$. This is extremely high, and has a severely negative effect on the training process.

One approach to fix this is \zhouprop, which shows much better results in experiments (\cref{sec:experiments}), but has the highest sampling variance of all methods (\cref{thm:sampling-variances}). 

\section{Convergence and sampling variance}
\label{app:convergence}
\renewcommand{\truegrad}{\ensuremath{U^{\text{WGO}}}}
\renewcommand{\stochgrad}{\ensuremath{\tilde{U}}}
\newcommand{\genericalgname}{\tt{Stochastic-Noisy-WGO}}
\newcommand{\dpNoiseStdDev}{\ensuremath{\sigma_\text{DP}}}

\newcommand{\samplingStdDev}{\ensuremath{v}}
We will prove convergence guarantees for noisy WGO algorithms, taking into account the sampling behaviour of the algorithms, the DP noise, and the min-max optimisation nature of these algorithms. Private WGO algorithms also include clipping steps, but we focus here on a simplified scenario without clipping, as this significantly complicates convergence analyses.

For the remainder of this section, we introduce the following notation:

Let $\phi(\theta, \lambda) = \sum_g \lambda_g \mathcal{L}(D_g, \theta)$ be the empirical WGO objective, where $\mathcal{L}(S) := \frac{1}{|S|}\sum_{z \in S}\ell(z, \theta)$ is the average loss over the data points in $S$ with respect to the current model parameters $\theta$. For any dataset $S$,  $U_t(S) := \frac{1}{|S|}\sum_{z \in S}\nabla\ell(z, \theta_t)$ is the gradient of the loss on this set at time step $t$. Therefore, the non-stochastic, full gradient of $\phi(\theta,\lambda)$ with respect to $\theta$  is 
\begin{equation}
  \truegrad_t := \sum\nolimits_{g \in [G]} \lambda_g U(D_g).
\end{equation}.

Consider the generic algorithm given in \cref{alg:generic-noisy-wgo}, which is parameterised by a sampling function $\stochgrad(D, \lambda)$, which will give a noisy estimate of the true gradient, including the weighting by $\lambda$. We will later assume a bound on the variance of this function, defined as 
\begin{equation}
  \var(\stochgrad) = \ex \n\truegrad - \stochgrad\n_2^2,
\end{equation}
which will be required to hold at all steps $t$ of optimisation (\ie for all $\lambda,\theta$), so we elide these parameters.
Define $\bar{\lambda} = \frac1T \sum_t \lambda_t$ and $\bar{\theta} = \frac1T \sum_t \theta_t$ to be the average of the group weightings and model parameters across all $T$ iterations.
 \begin{algorithm}[ht]
    \centering
    \caption{\genericalgname}  \label{alg:generic-noisy-wgo}
    \begin{algorithmic}[1]
      \REQUIRE
      sampling function $\stochgrad$,
      learning rate $\eta$,

      \smallskip
      \STATE initialize model parameters $\theta_1\in \Theta$
      \STATE initialize per-group weights $\lambda_g=\frac{1}{G}$ for $g\in[G]$
      \FOR{$t = 1,\dots, T$}
     
      \STATE sample a batch $B \leftarrow \stochgrad(D, \lambda)$  
      \STATE
      $\displaystyle \theta_{t+1} \leftarrow \mathrm{Proj}\left(\theta_{t} -
      \frac{\eta}{|B|}  \left(  \sum\limits_{z\in B}
      \nabla\ell(z,\theta_t) + \N(0,  \dpNoiseStdDev^2 I)  \right)  \right) $ 
      \STATE $\lambda \leftarrow \texttt{DP-Group-Reweight}(\lambda)$ 
      \ENDFOR %
    \end{algorithmic}
  \end{algorithm}

\subsection{Convergence of noisy WGO algorithms}

\begin{theorem}[name=Convergence of \genericalgname,label=thm:genericConvergenceThm]
  \Cref{alg:generic-noisy-wgo}, when applied to a convex $L$-smooth loss function, which is bounded in $[0,1]$, on a weight space bounded by $D$ in $\ell_2$ norm, enjoys the following convergence guarantee, parameterised by the standard deviations $\dpNoiseStdDev$ and $\samplingStdDev$, of the DP noise and the gradient sampling, respectively, and the gradient norm $G_0$ at the optimal solution: 

  \begin{equation}
    \begin{aligned}
       \ex[\max_g \mathcal{L}(D_g, \bar{\theta})] - \min_\theta \max_g \mathcal{L}(D_g, \theta) \leq  &O\left(\frac{D(L \cdot D + G_0 + \dpNoiseStdDev + \samplingStdDev)}{\sqrt{T}}\right) \\
        &+ O\left(\sqrt{\frac{ \tau^2  \log (G)\log(n^2 T)}{T}}\right).
    \end{aligned}
  \end{equation}

  when the learning rates are set by  $\eta_\theta =D/(\sqrt{T} \cdot (L \cdot D + G_0 + \dpNoiseStdDev + v))$ and $\eta_\ell = \sqrt{(\log G)/T A^2}$ for $A = 1 + \sqrt{2 \tau^2 \log (GT^2)}$.

\end{theorem}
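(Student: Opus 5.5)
The plan is to treat \cref{alg:generic-noisy-wgo} as a stochastic mirror-descent / gradient descent--ascent scheme for the convex--concave saddle problem $\min_\theta\max_{\lambda\in\Delta}\phi(\theta,\lambda)$. I would bound the duality gap of the averaged iterates $(\bar\theta,\bar\lambda)$, following the template of \citet{zhangStochasticApproximationApproaches2023}. Since $\phi$ is linear in $\lambda$ and convex in $\theta$, for any $\theta^\star$ and $\lambda^\star$ we have
\[
\max_g \mathcal{L}(D_g,\bar\theta) - \min_\theta\max_g\mathcal{L}(D_g,\theta) \le \max_{\lambda}\phi(\bar\theta,\lambda)-\min_\theta\phi(\theta,\bar\lambda) \le \frac1T\sum_{t}\big(\phi(\theta_t,\lambda^\star)-\phi(\theta^\star,\lambda_t)\big).
\]
By convexity in $\theta$ and linearity in $\lambda$, this average splits into a $\theta$-regret term $\frac1T\sum_t\langle \truegrad_t,\theta_t-\theta^\star\rangle$ and a $\lambda$-regret term $\frac1T\sum_t\langle \lambda^\star-\lambda_t, \ell_t\rangle$, where $\ell_t=(\mathcal{L}(D_g,\theta_t))_g$. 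The two terms are bounded separately. The one subtlety is that $\lambda^\star$ depends on the whole trajectory, so I would bound the expectation of the maximum over $\lambda^\star$, not the maximum of expectations.

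For the $\theta$ part I would run the standard projected SGD analysis. The stochastic direction is $\truegrad_t+\xi_t$, where $\xi_t$ combines the sampling error (variance at most $v^2$) and the DP noise $\N(0,\dpNoiseStdDev^2 I)/|B|$, which has variance at most $\dpNoiseStdDev^2$ up to dimension and batch factors absorbed in the $O$. Both components are zero-mean given the past. Using non-expansiveness of $\mathrm{Proj}$ gives
\[
\sum_t\langle\truegrad_t,\theta_t-\theta^\star\rangle \le \frac{D^2}{2\eta_\theta}+\frac{\eta_\theta}{2}\sum_t\ex\|\truegrad_t+\xi_t\|^2 + \sum_t\langle\xi_t,\theta^\star-\theta_t\rangle .
\]
I would bound $\|\truegrad_t\|\le G_0+2LD$ by $L$-smoothness and the diameter $D$. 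The martingale term has zero mean for fixed $\theta^\star$. For the worst case over $\theta^\star$ I would use the usual ghost-iterate trick, which costs another $\eta_\theta\sum\ex\|\xi_t\|^2$. With the stated $\eta_\theta$ this yields $O(D(LD+G_0+\dpNoiseStdDev+v)/\sqrt T)$.

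For the $\lambda$ part, \dpGroupReweight is exponentiated gradient on the simplex with noisy loss estimates $\hat\ell_{t,g}$. Each estimate is a subsampled average in $[0,1]$ plus $\N(0,\tau^2)/|B^\ell_g|$; I assume the clipped loss is bounded by $\zeta\le1$ and that batch sizes are at least $1$. The standard Hedge regret bound requires bounded payoffs, so I would first condition on the event that all $GT$ Gaussian perturbations satisfy $|\cdot|\le\sqrt{2\tau^2\log(GT^2)}$. A Gaussian tail plus a union bound makes this event fail with probability at most $1/T$. On the event, payoffs lie in $[-A,A]$ with $A=1+\sqrt{2\tau^2\log(GT^2)}$, and Hedge gives regret $\frac{\log G}{\eta_\ell}+\eta_\ell TA^2$, which equals $O(A\sqrt{T\log G})$ at the stated $\eta_\ell$. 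Off the event, boundedness of $\phi$ contributes $O(1/T)$.

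It remains to replace $\hat\ell_t$ by $\ell_t$. The subsampling error and the Gaussian error have zero conditional mean, but they are paired with the data-dependent $\lambda^\star$. I would handle this with the same ghost-sequence argument, or alternatively with a maximal inequality over the $G$ corners of the simplex. Either route contributes $O(\sqrt{(1+\tau^2)\log(G)/T})$ and matches the $\sqrt{\tau^2\log G\log(n^2T)/T}$ term. The $\log(n^2T)$ factor would arise from union-bounding over batch sizes up to $n$ alongside $T$.

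I expect the main obstacle to be this $\lambda$-side step: combining the unbounded Gaussian noise with the Hedge analysis, and controlling the correlation between $\lambda^\star$ and the noise without losing more than logarithmic factors. I would attempt the truncation-plus-ghost-iterate route first.
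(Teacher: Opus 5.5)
Your proposal is correct and uses the same skeleton as the paper: the gap is split into a $\theta$-regret term, handled by projected SGD, and a $\lambda$-regret term, handled by Hedge on the event that all loss-noise draws are below $\sqrt{2\tau^2\log(GT^2)}$, with the same $A$ and an $O(1/T)$ penalty off that event.

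The differences lie in how the trajectory-dependent comparators are treated, and in both places your route is the more careful one.

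For $\theta^\star(\bar\lambda)$, the paper bounds $-\langle U^{\mathrm{WGO}}_t-\tilde U_t,\theta^\star(\bar\lambda)\rangle$ termwise by Cauchy--Schwarz. Summed over $t$ this gives $DTv$, not the $D\sqrt{T}v$ the paper writes. It only goes through after pulling the fixed comparator out of the sum and using orthogonality of martingale differences, $D\,\mathbb{E}\|\sum_t(U^{\mathrm{WGO}}_t-\tilde U_t)\|\le D\sqrt{T}v$. Your ghost-iterate argument reaches the same order directly, and it correctly places the DP noise in $\xi_t$, which the paper's version of this term omits.

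On the $\lambda$ side, the Hedge bound concerns the noisy losses Hedge actually sees. The paper applies it directly to a regret defined with the true losses $\mathcal{L}(D_g,\theta_t)$. Your conversion step is exactly what is missing there: a zero-mean term $\sum_t\langle\lambda_t,\hat\ell_t-\ell_t\rangle$ plus a maximal inequality over the $G$ vertices for $\max_g\sum_t(\ell_{t,g}-\hat\ell_{t,g})$. It costs only $O(\sqrt{(1+\tau^2)\log G/T})$, so the rate is unchanged.

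Two minor points.
\begin{itemize}
\item The truncation yields $\log(GT^2)$, as in the paper's own proof, so you do not need a union bound over batch sizes; these are the deterministic $\lfloor\gamma^\ell n_g\rfloor$ anyway. The statement's $\log(n^2T)$ looks like a slip, and in any case it dominates $\log(GT^2)$ up to a factor $2$ once $n^4\ge G$.
\item Rather than hiding the dimension in the $O$, read $\sigma_{\mathrm{DP}}^2$ as the total second moment of the injected noise, as the paper implicitly does.
\end{itemize}

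Finally, both you and the paper drop the $\tau$-free $O(\sqrt{\log G/T})$ part of the Hedge regret coming from $A\ge 1$, which the stated bound does not display.
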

\begin{proof}
  Initially, we decompose the optimality gap into two terms, one based on the (non-)optimality of the model parameters, and the second on the group weights $\lambda$.
  \begin{align}
    & \max_g \mathcal{L}(D_g, \bar{\theta}) - \min_\theta \max_g \mathcal{L}(D_g, \theta) \\
    &=
    \max_\lambda \phi(\bar{\theta}, \lambda)
    - \min_\theta \max_\lambda \phi(\theta, \lambda) \\
    &\leq
    \max_\lambda   \phi(\avgtheta, \lambda)
    - \min_\theta \phi(\theta, \avglambda)\text{ (by non-optimality of } \avglambda,\avgtheta) \\
    &\leq \max_{\lambda, \theta} \{
      \frac1T
      \sum_{t=1}^{T} (\phi(\theta_t, \lambda) - \phi(\theta, \lambda_t))
    \} \text{ (by convexity)} \\
    &\leq \max_{\lambda, \theta} \{
      \frac1T
      \sum_{t=1}^{T} (\phi(\theta_t, \lambda) - \phi(\theta_t, \lambda_t) + \phi(\theta_t, \lambda_t) - \phi(\theta, \lambda_t))
    \}   \\
    &= \frac1T \underbrace{\left[
        \sum_t \phi(\theta_t, \lambda_t) - \min_\theta \sum_t \phi(\theta, \lambda_t)
    \right]}_{(A)} + \frac1T \underbrace{\left[
        \max_\lambda \sum_t \phi(\theta_t, \lambda) - \sum_t \phi(\theta_t, \lambda_t)
    \right]}_{(B)}.
  \end{align}

  Beginning with term A,
  define $\opttheta(\avglambda) = \arg\min_\theta \phi(\theta, \avglambda)$ to be the optimal model parameters for the average weighting $\avglambda$. Note that $\{\lambda_t\}_{t =1}^T$ is a trajectory of dependent random variables, and $\opttheta(\avglambda)$ depends on this.

  Given the sequence of stochastic updates $\{\stochgrad_t\}$ used by the algorithm, the update step satisfies:
  \begin{equation}
    \langle \stochgrad_t, \theta_t - \opttheta(\avglambda) \rangle
    \leq \frac{1}{2\eta_\theta} (\n\theta_t - \opttheta(\avglambda) \n_2^2 - \n \theta_{t+1} - \opttheta(\avglambda) \n_2^2) + \frac{\eta_\theta}{2} \n \stochgrad_t \n_2^2,
  \end{equation}

  Using the fact that the optimisation space has diameter bounded by $D$, this bound telescopes to give
  \begin{equation}
    \sum_{t=1}^T \langle \stochgrad_t, \theta_t - \opttheta(\avglambda) \rangle
    \leq \frac{D^2}{2\eta_\theta} + \frac{\eta_\theta}{2} \sum_{t=1}^T \n \stochgrad_t \n_2^2.
  \end{equation}

  The convexity of $\phi$ relates the optimality gap to the inner products above, but applied to the non-stochastic gradients $\truegrad_t$:
  \begin{align}
    \phi(\theta_t, \lambda_t) -  \phi(\opttheta(\avglambda), \lambda_t)
    \leq \langle \truegrad_t, \theta_t - \opttheta(\avglambda) \rangle
  \end{align}

  Taking into account the discrepancy between the stochastic and non-stochastic gradients, we have the following statement, in expectation, as the noisy gradient is an unbiased estimator only when we condition on the past.

  \begin{align}
    \sum_t \phi(\theta_t, \lambda_t) -  \phi(\opttheta(\avglambda), \lambda_t)
    &\leq \frac{D^2}{2\eta_\theta} + \frac{\eta_\theta}{2} \sum_{t=1}^T \n \truegrad_t \n_2^2+ \frac{\eta_\theta}{2} \sum_{t=1}^T \n \stochgrad_t - \truegrad_t\n_2^2 \\
    &+ \sum_{t=1}^T \langle \truegrad_t - \stochgrad_t, \theta_t - \opttheta(\avglambda) \rangle.
  \end{align} \label{eq:bound-on-term-A}

  Working through each of these terms, smoothness bounds the norm of the true gradients $\n \truegrad_t \n_2$ by $L \cdot D + G_0$, where $L$ is the smoothness constant, and $G_0$ is the norm of the gradient at the optimal solution. Next, the sampling variance term is bounded in expectation by assumption, and we are left with the inner product term.

  This term arises as the optimal weights $\avglambda$ depend on the trajectory of the algorithm, and thus the stochastic gradients are not unbiased estimates of the true gradients with respect to $\avglambda$. 
  It can be decomposed further, into

  \begin{equation}
    \langle \truegrad_t - \stochgrad_t, \theta_t - \opttheta(\avglambda) \rangle
    = \langle \truegrad_t - \stochgrad_t, \theta_t  \rangle
    -\langle \truegrad_t - \stochgrad_t, \opttheta(\avglambda) \rangle.
  \end{equation}

  The expectation of this first term is $0$, as the stochastic gradients are unbiased estimates of the true gradients with respect to $\theta_t$. However, the second term is not measurable from the past, and thus we need to bound it properly. Using Cauchy-Schwarz and the fact that $\opttheta(\avglambda)$ is bounded in norm by $D$, we have that
  
  \begin{equation}
    \ex[-\langle \truegrad_t - \stochgrad_t, \opttheta(\avglambda) \rangle]
    \leq D \cdot \ex[\n\truegrad_t - \stochgrad_t\n_2].
  \end{equation} \label{eq:theta-theta-inner-product-bound}

  Combining the bounds on all of the terms in \cref{eq:bound-on-term-A}, and using Jensen's inequality to relate \cref{eq:theta-theta-inner-product-bound} to the sampling variance, we have that:
  \begin{equation}
    \begin{aligned}
      \ex\left[\sum_t \phi(\theta_t, \avglambda) -  \phi(\opttheta(\avglambda), \avglambda) \right]
      \leq
      &\frac{D^2}{2\eta_\theta}
      + \frac{\eta_\theta}{2} \left(
        \sum_{t=1}^T \n \truegrad_t \n_2^2 +
      \sum_{t=1}^T \n \stochgrad_t - \truegrad_t\n_2^2\right) \\
      &+ D \cdot \sum_{t=1}^T \ex[\n\truegrad_t - \stochgrad_t\n_2].\\
      &\leq \frac{D^2}{2\eta_\theta}
      + \frac{T\eta_\theta}{2} (L \cdot D + G_0)^2
      + \frac{T\eta_\theta}{2} (\dpNoiseStdDev^2 + v^2)
      + D\sqrt{T} v.
    \end{aligned}
  \end{equation}

  Picking $\eta_\theta$ to be $\frac{D}{\sqrt{T} \cdot (L \cdot D + G_0 + \dpNoiseStdDev + v)}$ gives us a bound of $O\left(\frac{D(L \cdot D + G_0 + \dpNoiseStdDev + v)}{\sqrt{T}}\right)$ on term (A).

  For term (B), we follow the proof of \cite{zhouDifferentiallyPrivateWorstgroup2024}, which uses Hedge's algorithm.

  First, define a constant $A = 1 + \sqrt{2 \tau^2 \log (GT^2)}$, and noised, transformed losses $L'_{t,g} = U - L_t (D_g) + \z_g$, where $\z_g$ is Gaussian noise with variance $\tau^2$. The loss inputs to the Hedge algorithm should be appropriately bounded, and we know that by the choice of $A$, $\prob (\forall t \n L'_t \n_\infty \leq A) \geq 1-\frac{1}{T}$, so we can condition on this event and use $A$ as the bound on the losses.

  Conditioned on this event $E_A$, the regret bound of the Hedge algorithm \citep{slivkinsIntroductionMultiArmedBandits2024} has expectation
  \begin{equation}
    \ex[B | E_A] = O\left(A \sqrt{\frac{\log G}{T}}\right),
  \end{equation}

  given the learning rate is set to $\sqrt{\frac{\log G}{T A^2}}$.

  Applying the definition of $A$, and using the worst-case bound of $1$ on the losses when $E_A$ does not hold, we have that
  \begin{equation}
    \ex[B] \leq O\left(\sqrt{\frac{ \tau^2 \log (G)\log(G^2 T)}{T}}\right) + \frac{1}{T},
  \end{equation}

  where the last term disappears faster than the first.

\end{proof}

\subsection{Sampling variance of WGO algorithms}
\label{app:sampling_variance_proof}

For any dataset $S$ and $m\leq |S|$, let $\sample(m,S)$
be an operator that samples a batch of size $m$ from $S$
without replacement.
\acronym's stochastic update can be written as
\begin{align} %
  U^{\text{\acronym}} &= U \Big({\textstyle\bigcup\nolimits_{g \in [G]}} \sample(m_g,D_g) \Big).%
  \intertext{For \zhou, the updates are }
  U^{\text{\zhou}} &= U(\sample(M,D_g)) \text{ for $g\sim\text{Cat}(\lambda)$}
  \intertext{For \zhouprop the
  update is}
  U^{\text{\zhouprop}} &= U(\sample(m_g,D_g)) \text{ for $g\sim\text{Cat}(\lambda)$}.
  \intertext{For \lossreweighting the
  update is}
  U^{\text{\lossreweightingShort}} &= U(\sample(M,D_\text{\lossreweightingShort})) \text{ for } D_\text{\lossreweightingShort} = \left\{\frac{\lambda_g N}{n_g} z : z \in D_g, g \in [G]\right\}  .
\end{align}
It is easy to check that %
all of these updates are unbiased estimates of the WGO update, \ie
\begin{equation}
  \ex[U^{\text{\acronym}}]
  = \ex[U^{\text{\zhou}}]
  = \ex[U^{\text{\zhouprop}}]
  = \ex[U^{\text{\lossreweightingShort}}]
  = U^{\text{WGO}}.
\end{equation}

Repeating the theorem for convenience, we have:

\getkeytheorem{samplingVarianceThm}

\begin{proof}[Proof of \cref{thm:sampling-variances}]

  We start with the more complicated derivation, for \zhou, beginning with the covariance matrix of  $U^\text{\zhou}$. Given group $g$ was selected, define the update that would be performed by $U_{g}^\text{\zhou} = U(\sample(M, D_g))$.

  The following holds by the law of total covariance, applied to the fact that this sampling scheme is defined by a mixture distribution, with weights $\{\lambda_g\}_{g \in [G]}$:

  \newcommand{\Uwgo}{\ensuremath{U^{\text{WGO}}}}

  \begin{equation}
    \begin{aligned}
      \cov(U^\text{\zhou})
      &= \sum_{g} \lambda_g \cov(U_{g}^\text{\zhou})  \\
      &+ \sum_g \lambda_g
      (U(D_g) - \Uwgo)
      (U(D_g) - \Uwgo)^\top
      .
    \end{aligned}
  \end{equation}

  Next, we turn to the covariance of the average update on the sampled batch, $\cov(U_{g}^\text{\zhou})$, where the samples are drawn without replacement. This is covered by a simple lemma:

  \begin{lemma}
    \label{lem:cov-wor-sampling}
    Consider a batch of size $m$, represented by indices $\{I_j\}_{j \in [m]}$, which is sampled without replacement from a population size $\{v_i\}_{i \in [n]}$.

    The population covariance matrix is given by $\cov(v_I) = \ex[(v_{I} - \ex[v_{I}])(v_{I} - \ex[v_{I}])^\top]$ for $I \sim \mathrm{Unif}[n]$, and the cross-covariance matrix is $K_{IJ}= \ex[(v_{I} - \ex[v_{I}])(v_{J} - \ex[v_{J}])^\top]$ for $I\neq J$ drawn uniformly without replacement from $[n]$. %

    The covariance matrix of the batch sum $\cov(\sum_{i \in [m]} v_{I_i})$ is given by
      \begin{equation}
        \cov\left(\sum_{i \in [m]} v_{I_i}\right)
        =
        m \cdot\frac{n-m}{n-1} \cdot \cov(v_I).
      \end{equation}

    \end{lemma}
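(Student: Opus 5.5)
The plan is to expand the covariance of the batch sum into diagonal and off-diagonal parts, and then pin down the cross-covariance $K_{IJ}$ by exploiting the fact that the covariance vanishes when the whole population is sampled.

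\textbf{Step 1: expand the covariance.} Write $\mu = \ex[v_I] = \frac1n\sum_i v_i$ and the centred vectors $w_i = v_i - \mu$. Each index $I_j$ is marginally uniform on $[n]$, and each ordered pair $(I_j, I_k)$ with $j\neq k$ is uniform over ordered pairs of distinct indices. Bilinearity then gives
\begin{equation}
  \cov\Big(\sum_{j\in[m]} v_{I_j}\Big) = \sum_{j}\ex[w_{I_j}w_{I_j}^\top] + \sum_{j\neq k}\ex[w_{I_j}w_{I_k}^\top] = m\cov(v_I) + m(m-1)K_{IJ}.
\end{equation}

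\textbf{Step 2: compute $K_{IJ}$.} Since $\sum_{i\in[n]} w_i = 0$, we have
\begin{equation}
  \sum_{i\neq i'} w_i w_{i'}^\top = \Big(\sum_i w_i\Big)\Big(\sum_i w_i\Big)^\top - \sum_i w_iw_i^\top = -n\cov(v_I).
\end{equation}
Averaging over the $n(n-1)$ ordered pairs gives $K_{IJ} = -\frac{1}{n-1}\cov(v_I)$.

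A useful sanity check is the case $m=n$. There the batch sum is deterministic, so the covariance must vanish: $n\cov(v_I) + n(n-1)K_{IJ} = 0$, which yields the same value of $K_{IJ}$.

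\textbf{Step 3: combine.} Substituting Step 2 into Step 1,
\begin{equation}
  m\cov(v_I)\Big(1 - \frac{m-1}{n-1}\Big) = m\,\frac{n-m}{n-1}\cov(v_I),
\end{equation}
which is the claimed formula.

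The only point needing care is Step 2, namely computing the cross-covariance. It rests on the observation that centred vectors sum to zero across the finite population. This is exactly what makes without-replacement sampling anti-correlated. Everything else is routine bilinearity and symmetry of the uniform sampling distribution.
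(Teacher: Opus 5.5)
Your proof is correct and follows the paper's route. You use the same diagonal/off-diagonal expansion $m\cov(v_I) + m(m-1)K_{IJ}$, and you pin down $K_{IJ}$ through the full-population case. Your ``sanity check'' with $m=n$ is exactly the paper's argument. Your direct computation from $\sum_i w_i = 0$ is the same fact stated explicitly, and it has the small advantage of making clear that $K_{IJ}$ does not depend on $m$.
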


    \begin{proof}

      We start by expanding the batch sum's covariance over the sum, giving us

      \begin{equation}
        \cov\left(\sum_{i \in [m]} v_{I_i}\right)
        =
        m \cdot \cov(v_I) + (m^2 -m) \cdot K_{IJ}.
      \end{equation}

      $K_{IJ}$ is eliminated by considering a batch which is the same size $m=n$ as the whole dataset, which obviously has no covariance as the only randomness is over the sampling scheme. Solving $\cov\left(\sum_{i \in [n]} v_{I_i}\right) = \mathbf{0}$ leads to $K_{IJ} = \frac{-1}{n-1} \cov(v_I)$ and the result in the lemma statement.
    \end{proof}

    Thus, $\cov(U_{g}^\text{\zhou}) = \frac{1}{M} \frac{n_g - M}{n_g -1} \cov_g$, where $\cov_g$ is shorthand for $\cov_{z \sim D_g}(\nabla_{\theta}(\ell(z,\theta)))$. Together, this gives us

    \begin{equation}
      \label{eq:zhou-cov-fully-expanded}
      \begin{aligned}
        \cov(U^\text{\zhou})
        &= \sum_{g} \lambda_g  \frac{1}{M} \frac{n_g - M}{n_g -1} \cov_g  \\
        &+ \sum_g \lambda_g
        (U(D_g) - \Uwgo)
        (U(D_g) - \Uwgo)^\top
        .
      \end{aligned}
    \end{equation}

    For \zhou, the final step to determine $\var(U^{\text{\zhou}})$ is to take the trace of both sides of \cref{eq:zhou-cov-fully-expanded}, as the trace is linear and cyclic:

    \begin{equation}
      \label{eq:zhou-var-post-trace}
      \begin{aligned}
        \var(U^{\text{\zhou}})
        &= \sum_{g} \lambda_g  \frac{1}{M} \frac{n_g - M}{n_g -1} \var_g  \\
        &+ \sum_g \lambda_g
        \left\| U(D_g) - \Uwgo \right\|_2^2
        .
      \end{aligned}
    \end{equation}

    Note that the variance reduction given by using batch sizes larger than $1$ does not apply to the second term of this variance, which depends solely on the distance between the per-group update and the overall update.

    The same proof holds for \zhouprop, except for the replacement of $M$ with $m_g$ throughout.

    For our method, we rewrite the variance by absorbing the expected update $\Uwgo$ into the sums over the batch parts, which is possible as the sub-batches $B^\text{\acronym}_g := \sample(m_g, D_g)$ have sizes $M\cdot \lambda_g$:
    \begin{align}
      \var(U^\text{\acronym})
      &=
      \ex \left\|U ^\text{\acronym} - \Uwgo\right\|^2 \\
      &=
      \ex \left\n \left(
        \frac1M \sum_{g \in [G]}
        \sum_{z \in B^\text{\acronym}_g}
      \nabla_{\theta} \ell(z, \theta) \right)
      -  \sum_{g \in [G]} \lambda_g U(D_g)
      \right\n_2^2 \\
      &=
      \ex \left\n
      \frac1M \sum_{g \in [G]}
      \sum_{z \in B^\text{\acronym}_g}
      \left(\nabla_{\theta} \ell(z, \theta) -U(D_g)\right)
      \right\n_2^2. \\
      \intertext{
        By the definition of $U(D_g)$, the per-group sums $\sum_{z \in B^\text{\acronym}_g}  \left(\nabla_{\theta} \ell(z, \theta) -U(D_g)\right)$ have zero expectation, and are independent as they are sampled from different groups:
      }
      &=
      \frac1{M^2}
      \sum_{g \in [G]}
      \ex \left\n
      \sum_{z \in B^\text{\acronym}_g}
      \nabla_{\theta} \ell(z, \theta) - U(D_g)
      \right\n_2^2 \\
      \intertext{
        Applying \cref{lem:cov-wor-sampling}, the relationship $\var(X) = \ex \|X\|_2^2=\operatorname{trace}(\cov(X))$ and the linearity of $\operatorname{trace}$, this simplifies to
      }
      &=
      \frac1{M^2}
      \sum_{g \in [G]}
      m_g \cdot \frac{n_g - m_g}{n_g - 1}
      \var_g \\
      &=
      \frac1{M}
      \sum_{g \in [G]}
      \lambda_g \cdot \frac{n_g - m_g}{n_g - 1}
      \var_g \\
    \end{align}

    Finally, for \lossreweighting, the estimator is the sample mean of a batch of size $M$ drawn without replacement from the modified population $V = \left\{ v_z \right\}_{z \in D}$, where we define the per-sample reweighted gradient as $v_z = \frac{N \lambda_{g(z)}}{n_{g(z)}} \nabla_\theta \ell(z, \theta)$.

Applying \cref{lem:cov-wor-sampling} to the entire dataset $D$, the covariance matrix of the batch sum is $M \frac{N-M}{N-1} \cov(v_z)$. Because our update $U^{\text{Reweight}}$ is the average over the batch, we scale this variance by $\frac{1}{M^2}$. Taking the trace yields:
$$
\var(U^{\text{Reweight}}) = \frac{1}{M} \frac{N-M}{N-1} \var_{z \sim \mathrm{Unif}(D)}(v_z).
$$

To decompose the population variance $\var(v_z)$, we apply the law of total variance conditioned on the group identity $g(z)$. A uniformly drawn sample from $D$ belongs to group $g$ with probability $\frac{n_g}{N}$. 
Next, we calculate the expected intra-group variance, as:
$$
\ex_g [\var(v_z \mid g)] = \sum_{g \in [G]} \frac{n_g}{N} \left( \frac{N \lambda_g}{n_g} \right)^2  \var_g = \sum_{g \in [G]} N \frac{\lambda_g^2}{n_g} \var_g.
$$

Next, we calculate the variance of the intra-group expectations. Conditioned on $g$, the expected vector is $\ex[v_z \mid g] = \frac{N \lambda_{g(z)}}{n_{g(z)}} U(D_g)$. Since the overall expectation over the full population is $U^{\text{WGO}}$, we have:
$$
\var_g (\ex[v_z \mid g]) = \sum_{g \in [G]} \frac{n_g}{N} \left\| \frac{N \lambda_g}{n_g} U(D_g) - U^{\text{WGO}} \right\|_2^2.
$$

Summing the two terms yields the population variance $\var(v_z)$, which we substitute back to arrive at the final sampling variance:
$$
\var(U^{\text{Reweight}}) = \frac{1}{M}\frac{N - M}{N - 1} \left( \sum_{g \in [G]} N \frac{\lambda_g^2}{n_g} \var_g + \sum_{g \in [G]} \frac{n_g}{N} \left\| \frac{N \lambda_g}{n_g} U(D_g) - U^{\text{WGO}} \right\|_2^2 \right).
$$

  \end{proof}

\section{Experiment Details}
\label{app:experiments}
\subsection{Datasets}

We use three standard benchmarks for fair classification:
\begin{itemize}[nosep]
  \item \emph{Unbalanced MNIST}~\citep{esipovaDisparateImpactDifferential2022},
    a modified version of the classical MNIST~\citep{lecun2010mnist}, in which class 8 is downsampled
    by a factor of approximately 10. Each individual digit constitutes a group, \ie groups are
    aligned with the classification targets.
    We train a two-layer CNN as in~\citep{esipovaDisparateImpactDifferential2022}.
    \smallskip
  \item \emph{CelebA}~\citep{liuDeepLearningFace2015}, a large-scale face classification dataset.
    The learning task is to predict the \emph{male/female} attribute.
    Here, the group are more fine-grained than the classification targets, namely
    \emph{blond male} (a clear minority group), \emph{blond female}, \emph{non-blond male},
    and \emph{non-blond female}. %
    As a model, we finetune a pretrained ResNet50~\citep{heDeepResidualLearning2016}.
    \smallskip
  \item \emph{Bank Account Fraud}~\citep{jesusTurningTablesBiased2022}, a set of large-scale tabular datasets about predicting fraudulent behavior when opening bank accounts.
    The four groups are the pairwise combinations of \emph{fraudulent}/\emph{not fraudulent} with \emph{age under/over 50}. We train a small MLP with 2 hidden layers of size 256 and ReLU nonlinearities.
\end{itemize}
\subsection{Models}

For the Unbalanced MNIST dataset, we train the same small CNN as
\citet{esipovaDisparateImpactDifferential2022}, which has two
hidden convolutional layers with 32 and 16 channels respectively, and
kernel size 3 followed by a fully connected layer to the 10 output
classes. The model uses Tanh activations.

The ResNet50 for CelebA used the Huggingface weights, which were pretrained on the ImageNet50 dataset. Note that the
ResNet50 model uses GroupNorm \cite{wuGroupNormalization2020} with 32
groups instead of BatchNorm to ensure compatibility with DPSGD,
using Opacus \cite{yousefpourOpacusUserFriendlyDifferential2022}'s
automatic conversion.

\subsection{Computing Resources}

Our experiments on the Unbalanced MNIST and Bank Fraud datasets were run on CPU only, with each run taking at most 1 hour / 30 minutes respectively. Our experiments on CelebA were run on single gpus, Nvidia A10 or better, with each run taking at most 12 hours. All of our experiments were implemented in PyTorch \citep{pytorch}.

\subsection{Uncertainty and random seeds}

All tables and figures include uncertainty estimation, corresponding to one standard deviation over the random seeds used for each hyperparameter setup.
For \cref{tab:headline_combined}, 5 seeds were used, whereas we used 3 for \cref{fig:wga_eps} due to the increased number of runs necessary to generate this figure. \Cref{fig:sampling_variance_for_algs} was plotted using 10 seeds for each setup, to reduce noise in the visualization. \cref{fig:wga_variance_mnist,fig:wga_variance_celeba,fig:wga_variance_bank} are all plotted from the same runs as \cref{tab:headline_combined}.

\subsection{Choosing Hyperparameters}

Experimental baselines were not available for all of these datasets and methods under privacy, meaning that prior information on what parameters represent a reasonable learning scenario was lacking.
Thus, we designed a small sweep of plausible hyperparameter combinations (\eg learning, rate, momentum,...), and tested each method on all combinations, and selected the best combination for each method, measured by the worst group accuracy on the validation set (which was not used at any other point in our experiments).

Note that the hyperparameter sweep for \zhou  also included an extra, smaller batch size for the CelebA dataset, to include some runs with somewhat smaller noise multipliers. As the number of steps per epoch is calculated by taking $\text{dataset size}/\text{batch size}$, this results in more steps being taken overall, but with the same final privacy guarantee.
This hyperparameter sweep is not accounted for in our privacy guarantees, as it would incur the same privacy cost for all methods, and was not performed privately, as this would reduce the accuracy of our comparisons if noise was introduced into selection. This means that our results leak the privacy of the validation set, which would be unacceptable if applied to a real learning problem, but is permissible here to be able to demonstrate all of the methods fairly.

For DP methods, $\delta$ was set to $\frac{1}{2N}$, where $N$ is the
total number of datapoints in the dataset.

Our experiment hyperparameters are given in the following table, for
\cref{tab:headline_combined}. \Cref{fig:sampling_variance_for_algs} was run using the best hyperparameter combinations found in the runs for \cref{tab:headline_combined}, and the same was done for \cref{fig:wga_eps} at   $\epsilon$ values in $[0.1, 0.5, 1.0, 1.33, 1.7, 2.12, 5]$.

\subsection{Implementation of WGO Algorithms}

In \cref{app:hyp-table}, we detail the hyperparameters used for each dataset and method.

To implement all of the algorithms, we set a target $(\epsilon,\delta)$-DP guarantee, and all other parameters, and searched over possible noise multipliers $\sigma/\xi$, (and $\tau/\zeta$) to achieve this guarantee, using \cref{thm:method_dp} and \cref{prop:zhou-priv} as well as conversion from RDP (\cref{prop:conv-rdp-eps}) over a set of reasonable $\alpha$ orders, taken from the privacy accountant library.

For the WGO related parameters, we briefly discuss their implementation:

\begin{itemize}
  \item \emph{WGO NM scaling}: the ratio between $\tau/\zeta$ and $\sigma/\xi$
  \item \emph{WGO noise multiplier}: the resulting noise multiplier for the loss releases in the WGO step
  \item \emph{WGO update frequency}: how frequently, in \# of epochs, the group weights should be updated.
  \item \emph{\# of WGO updates}: the resulting number of WGO updates performed.
  \item \emph{WGO loss clipping}: $\zeta$, used to clip the loss values.
  \item \emph{WGO learning rate}: $\eta$, used in the exponential reweighting.
  \item \emph{loss sampling rate}: set to $1$ for all datasets.
\end{itemize}

\subsection{Hyperparameter combinations}
\label{app:hyp-table}
\renewcommand{\arraystretch}{1.5}
\begin{longtable}{p{2cm} p{4cm} p{4cm} p{4cm}}
\toprule
Configuration & Unbalanced MNIST & CelebA & Bank Fraud \\
\midrule
\endfirsthead
\toprule
Configuration & Unbalanced MNIST & CelebA & Bank Fraud \\
\midrule
\endhead
\midrule
\multicolumn{4}{r}{Continued on next page} \\
\midrule
\endfoot
\bottomrule
\endlastfoot
Number of seeds & $\num{5}$ & $\num{5}$ & $\num{5}$ \\
Model & mnist\_cnn & resnet50 & tabular\_small \\
Accuracy metric & argmax & argmax & argmax \\
Loss & cross\_entropy & cross\_entropy & cross\_entropy \\
$\epsilon$ & $\num{1.00}$ & $\num{1.00}$ & $\num{1.00}$ \\
DP clipping threshold & $\num{1.00}$ & $\num{0.50}$ & $\num{1.00}$ \\
Optimizer & SGD & SGD & SGD \\
Learning rate & $\num{1.00e-02}$, $\num{1.00e-03}$ & $\num{1.00e-02}$, $\num{1.00e-03}$, $\num{1.00e-04}$, $\num{1.00e-05}$ & $\num{1.00e-02}$, $\num{1.00e-03}$, $\num{5.00e-02}$ \\
Momentum & $0$, $\num{0.50}$ & $\num{0.50}$, $\num{0.90}$ & $0$, $\num{0.50}$ \\
Weight decay & $0$ & $0$ & $\num{1.00e-04}$ \\
Nesterov momentum & \textit{False} & \textit{False} & \textit{False} \\
Pretrained weights & \textit{False} & \textit{True} & \textit{False} \\
Batch size & $\num{128}$, $\num{256}$, $\num{512}$ & $\num{256}$: all methods;\newline $\num{36}$: \zhou & $\num{1000}$, $\num{2000}$, $\num{4000}$ \\
\# Epochs & $\num{60}$ & $\num{50}$ & $\num{25}$ \\
Noise multiplier (calc.) & \zhouprop: $\num{11.17}$, $\num{14.53}$, $\num{9.06}$;\newline \acronym (ours): $\num{11.17}$, $\num{14.53}$, $\num{9.06}$;\newline \lossreweightingShort: $\num{11.17}$, $\num{13.59}$, $\num{9.06}$;\newline \zhou: $\num{620.00}$, $\num{670.00}$, $\num{870.00}$;\newline DPSGD-F: $\num{13.12}$, $\num{6.64}$, $\num{9.30}$;\newline DPSGD: $\num{13.05}$, $\num{6.60}$, $\num{9.22}$ & $\num{213.75}$: \zhou;\newline $\num{5.08}$: DPSGD, DPSGD-F;\newline $\num{5.59}$: all except DPSGD, DPSGD-F, \zhou;\newline $\num{570.00}$: \zhou & \acronym (ours): $\num{4.22}$, $\num{5.55}$, $\num{7.54}$;\newline \zhouprop: $\num{4.22}$, $\num{5.55}$, $\num{7.54}$;\newline \zhou: $\num{600.00}$, $\num{850.00}$;\newline DPSGD: $\num{3.83}$, $\num{5.27}$, $\num{7.34}$;\newline \lossreweightingShort: $\num{4.22}$, $\num{5.55}$, $\num{850.00}$;\newline DPSGD-F: $\num{3.87}$, $\num{5.27}$, $\num{7.34}$ \\
$\delta$ & $\num{1.02e-05}$ & $\num{3.07e-06}$ & $\num{7.40e-07}$ \\
\# of WGO updates & $\num{60.00}$ & $\num{50.00}$ & $\num{25.00}$ \\
\# steps, total & $\num{11580.00}$, $\num{23100.00}$, $\num{5820.00}$ & $\num{226100.00}$: \zhou;\newline $\num{31800.00}$: all methods & $\num{16900.00}$, $\num{4225.00}$, $\num{8450.00}$ \\
WGO NM scaling & $\num{10.00}$ & $\num{25.00}$ & $\num{25.00}$ \\
WGO learning rate & $\num{0.10}$, $\num{0.50}$, $\num{1.00e-02}$, $\num{1.00}$, $\num{5.00e-02}$, $\num{5.00e-03}$ & $\num{0.10}$ & $\num{0.10}$, $\num{0.50}$, $\num{1.00e-02}$, $\num{1.00}$, $\num{5.00e-02}$, $\num{5.00e-03}$ \\
WGO loss clipping & $\num{1.00}$ & $\num{1.00}$ & $\num{10.00}$ \\
WGO update frequency & $\num{1.00}$ & $\num{1.00}$ & $\num{1.00}$ \\
\end{longtable}

\clearpage

\myparagraph{Experimental Details for Variance Evaluation}
To visualize the sampling variance during training (\Cref{fig:sampling_variance_for_algs}), we evaluated the theoretical quantities from \Cref{thm:sampling-variances} by drawing fresh samples from the dataset every 10th gradient update. For the Unbalanced MNIST experiments, \acronym and \zhouprop were trained with strict privacy guarantees of $(\epsilon,\delta)=(2.12, 1.02\cdot 10^{-5})$. However, to ensure a comparable DP noise multiplier and achieve a reasonable baseline performance, \zhou was trained with a relaxed parameter of $\epsilon=14.14$. The validation \WGA evaluations (\Cref{fig:wga_variance_mnist}) also include the \lossreweighting algorithm for completeness.

\section{Further Experimental Results}
\label{sec:extra-experiments}

\begin{figure}
  \centering
  \includegraphics{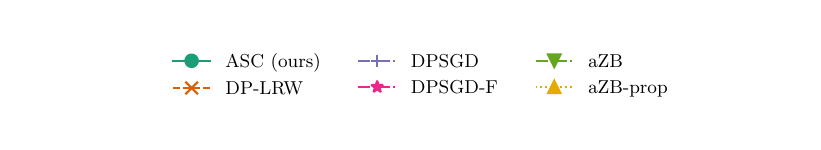}
  \includegraphics{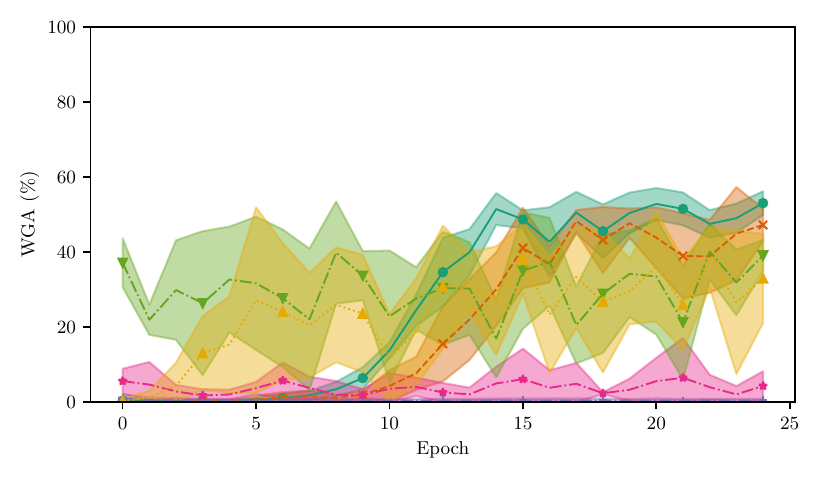}
  \caption{Validation set WGA during training on the Bank Fraud dataset, $(\epsilon=1,\delta=\frac{1}{2n})$.}
  \label{fig:wga_variance_bank}
\end{figure}

In addition to \cref{fig:wga_variance_mnist}, which plots the Worst-case Group Accuracy (\WGA) over the course of training on the Unbalanced MNIST dataset, we also present the same results on the Bank Fraud \cref{fig:wga_variance_bank} and CelebA \cref{fig:wga_variance_celeba} datasets. The plots are generated using the same training runs as \cref{tab:headline_combined}, and the shaded region shows $\pm1$ standard deviation around the mean.

\begin{figure}
  \centering
  \includegraphics{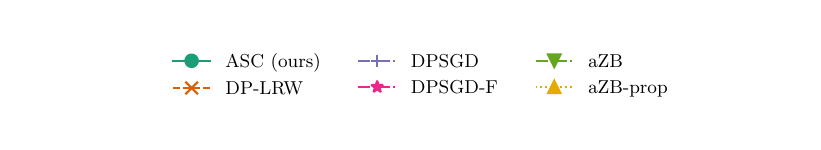}
  \includegraphics{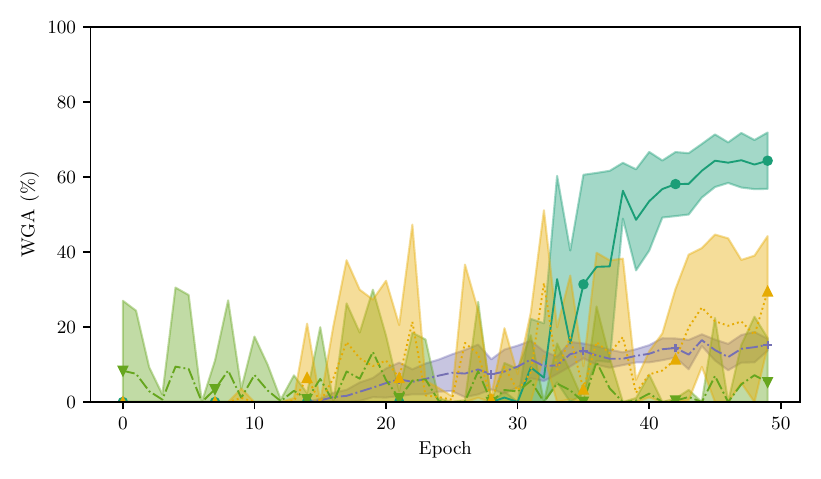}
  \caption{Validation set WGA during training on the CelebA dataset, $(\epsilon=1,\delta=\frac{1}{2n})$.}
  \label{fig:wga_variance_celeba}
\end{figure}

\begin{figure*}
  \centering
  \includegraphics{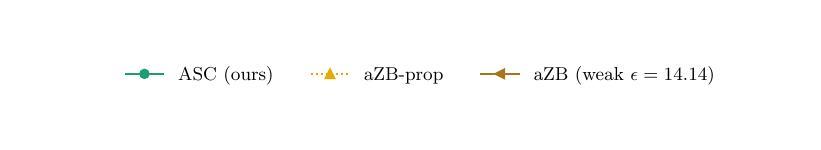}
  \vspace{-1.5em}

  \includegraphics{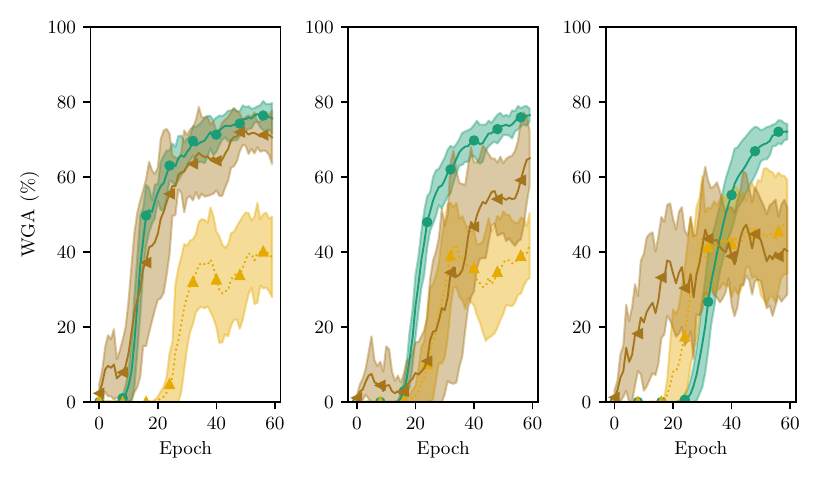}
  \vspace{-2em}

  \begin{subcaptiongroup}
    \centering
    \parbox[t]{.1\textwidth}{%
      \centering  \phantom{spacer}%
    }%
    \parbox[t]{.3\textwidth}{%
    \centering \caption{Batch size $M=128$}}%
    \parbox[t]{.275\textwidth}{%
    \centering \caption{Batch size $M=256$}}%
    \parbox[t]{.3\textwidth}{%
    \centering \caption{Batch size $M=474$}}%
  \end{subcaptiongroup}
  \caption{\WGA on validation set across training, for the same runs as \cref{fig:sampling_variance_for_algs}. Note that \zhou's algorithm was run with significantly higher $\epsilon$ than the others, allowing it to achieve stronger performance, and so reasonable convergence behaviour, than otherwise seen in \cref{tab:headline_combined}.
  }\label{fig:wga_variance_for_sampling_variance_runs}
\end{figure*}

\clearpage
\section*{NeurIPS Paper Checklist}

\begin{enumerate}

\item {\bf Claims}
    \item[] Question: Do the main claims made in the abstract and introduction accurately reflect the paper's contributions and scope?
    \item[] Answer: \answerYes{} %
    \item[] Justification: The main claims made in the abstract and introduction accurately reflect the
paper’s contributions and scope.
    \item[] Guidelines:
    \begin{itemize}
        \item The answer \answerNA{} means that the abstract and introduction do not include the claims made in the paper.
        \item The abstract and/or introduction should clearly state the claims made, including the contributions made in the paper and important assumptions and limitations. A \answerNo{} or \answerNA{} answer to this question will not be perceived well by the reviewers. 
        \item The claims made should match theoretical and experimental results, and reflect how much the results can be expected to generalize to other settings. 
        \item It is fine to include aspirational goals as motivation as long as it is clear that these goals are not attained by the paper. 
    \end{itemize}

\item {\bf Limitations}
    \item[] Question: Does the paper discuss the limitations of the work performed by the authors?
    \item[] Answer: \answerYes{} %
    \item[] Justification: We have discussed the limitations of our work in the conclusion.
    \item[] Guidelines:
    \begin{itemize}
        \item The answer \answerNA{} means that the paper has no limitation while the answer \answerNo{} means that the paper has limitations, but those are not discussed in the paper. 
        \item The authors are encouraged to create a separate ``Limitations'' section in their paper.
        \item The paper should point out any strong assumptions and how robust the results are to violations of these assumptions (e.g., independence assumptions, noiseless settings, model well-specification, asymptotic approximations only holding locally). The authors should reflect on how these assumptions might be violated in practice and what the implications would be.
        \item The authors should reflect on the scope of the claims made, e.g., if the approach was only tested on a few datasets or with a few runs. In general, empirical results often depend on implicit assumptions, which should be articulated.
        \item The authors should reflect on the factors that influence the performance of the approach. For example, a facial recognition algorithm may perform poorly when image resolution is low or images are taken in low lighting. Or a speech-to-text system might not be used reliably to provide closed captions for online lectures because it fails to handle technical jargon.
        \item The authors should discuss the computational efficiency of the proposed algorithms and how they scale with dataset size.
        \item If applicable, the authors should discuss possible limitations of their approach to address problems of privacy and fairness.
        \item While the authors might fear that complete honesty about limitations might be used by reviewers as grounds for rejection, a worse outcome might be that reviewers discover limitations that aren't acknowledged in the paper. The authors should use their best judgment and recognize that individual actions in favor of transparency play an important role in developing norms that preserve the integrity of the community. Reviewers will be specifically instructed to not penalize honesty concerning limitations.
    \end{itemize}

\item {\bf Theory assumptions and proofs}
    \item[] Question: For each theoretical result, does the paper provide the full set of assumptions and a complete (and correct) proof?
    \item[] Answer: \answerYes{} %
    \item[] Justification: The full proofs and assumptions are given in the appendices for all results.
    \item[] Guidelines:
    \begin{itemize}
        \item The answer \answerNA{} means that the paper does not include theoretical results. 
        \item All the theorems, formulas, and proofs in the paper should be numbered and cross-referenced.
        \item All assumptions should be clearly stated or referenced in the statement of any theorems.
        \item The proofs can either appear in the main paper or the supplemental material, but if they appear in the supplemental material, the authors are encouraged to provide a short proof sketch to provide intuition. 
        \item Inversely, any informal proof provided in the core of the paper should be complemented by formal proofs provided in appendix or supplemental material.
        \item Theorems and Lemmas that the proof relies upon should be properly referenced. 
    \end{itemize}

    \item {\bf Experimental result reproducibility}
    \item[] Question: Does the paper fully disclose all the information needed to reproduce the main experimental results of the paper to the extent that it affects the main claims and/or conclusions of the paper (regardless of whether the code and data are provided or not)?
    \item[] Answer: \answerYes{} %
    \item[] Justification: We provide all of the relevant information in the main paper and appendices, including details of the models, datasets, computing resources, hyperparameters, and implementation of the algorithms. We also provide a detailed table of all hyperparameters used in the main experiments table.
    \item[] Guidelines:
    \begin{itemize}
        \item The answer \answerNA{} means that the paper does not include experiments.
        \item If the paper includes experiments, a \answerNo{} answer to this question will not be perceived well by the reviewers: Making the paper reproducible is important, regardless of whether the code and data are provided or not.
        \item If the contribution is a dataset and\slash or model, the authors should describe the steps taken to make their results reproducible or verifiable. 
        \item Depending on the contribution, reproducibility can be accomplished in various ways. For example, if the contribution is a novel architecture, describing the architecture fully might suffice, or if the contribution is a specific model and empirical evaluation, it may be necessary to either make it possible for others to replicate the model with the same dataset, or provide access to the model. In general. releasing code and data is often one good way to accomplish this, but reproducibility can also be provided via detailed instructions for how to replicate the results, access to a hosted model (e.g., in the case of a large language model), releasing of a model checkpoint, or other means that are appropriate to the research performed.
        \item While NeurIPS does not require releasing code, the conference does require all submissions to provide some reasonable avenue for reproducibility, which may depend on the nature of the contribution. For example
        \begin{enumerate}
            \item If the contribution is primarily a new algorithm, the paper should make it clear how to reproduce that algorithm.
            \item If the contribution is primarily a new model architecture, the paper should describe the architecture clearly and fully.
            \item If the contribution is a new model (e.g., a large language model), then there should either be a way to access this model for reproducing the results or a way to reproduce the model (e.g., with an open-source dataset or instructions for how to construct the dataset).
            \item We recognize that reproducibility may be tricky in some cases, in which case authors are welcome to describe the particular way they provide for reproducibility. In the case of closed-source models, it may be that access to the model is limited in some way (e.g., to registered users), but it should be possible for other researchers to have some path to reproducing or verifying the results.
        \end{enumerate}
    \end{itemize}

\item {\bf Open access to data and code}
    \item[] Question: Does the paper provide open access to the data and code, with sufficient instructions to faithfully reproduce the main experimental results, as described in supplemental material?
    \item[] Answer: \answerNo{} %
    \item[] Justification: We will release the code once it has been carefully anonymized.
    \item[] Guidelines:
    \begin{itemize}
        \item The answer \answerNA{} means that paper does not include experiments requiring code.
        \item Please see the NeurIPS code and data submission guidelines (\url{https://neurips.cc/public/guides/CodeSubmissionPolicy}) for more details.
        \item While we encourage the release of code and data, we understand that this might not be possible, so \answerNo{} is an acceptable answer. Papers cannot be rejected simply for not including code, unless this is central to the contribution (e.g., for a new open-source benchmark).
        \item The instructions should contain the exact command and environment needed to run to reproduce the results. See the NeurIPS code and data submission guidelines (\url{https://neurips.cc/public/guides/CodeSubmissionPolicy}) for more details.
        \item The authors should provide instructions on data access and preparation, including how to access the raw data, preprocessed data, intermediate data, and generated data, etc.
        \item The authors should provide scripts to reproduce all experimental results for the new proposed method and baselines. If only a subset of experiments are reproducible, they should state which ones are omitted from the script and why.
        \item At submission time, to preserve anonymity, the authors should release anonymized versions (if applicable).
        \item Providing as much information as possible in supplemental material (appended to the paper) is recommended, but including URLs to data and code is permitted.
    \end{itemize}

\item {\bf Experimental setting/details}
    \item[] Question: Does the paper specify all the training and test details (e.g., data splits, hyperparameters, how they were chosen, type of optimizer) necessary to understand the results?
    \item[] Answer: \answerYes{} %
    \item[] Justification: See the section on experiment results and also the appendices, including a detailed table of all hyperparameters used in the main experiments table.
    \item[] Guidelines:
    \begin{itemize}
        \item The answer \answerNA{} means that the paper does not include experiments.
        \item The experimental setting should be presented in the core of the paper to a level of detail that is necessary to appreciate the results and make sense of them.
        \item The full details can be provided either with the code, in appendix, or as supplemental material.
    \end{itemize}

\item {\bf Experiment statistical significance}
    \item[] Question: Does the paper report error bars suitably and correctly defined or other appropriate information about the statistical significance of the experiments?
    \item[] Answer: \answerYes{} %
    \item[] Justification: All our experiments were run with multiple seeds, and reported with error bars showing the standard deviation across those seeds.
    \item[] Guidelines:
    \begin{itemize}
        \item The answer \answerNA{} means that the paper does not include experiments.
        \item The authors should answer \answerYes{} if the results are accompanied by error bars, confidence intervals, or statistical significance tests, at least for the experiments that support the main claims of the paper.
        \item The factors of variability that the error bars are capturing should be clearly stated (for example, train/test split, initialization, random drawing of some parameter, or overall run with given experimental conditions).
        \item The method for calculating the error bars should be explained (closed form formula, call to a library function, bootstrap, etc.)
        \item The assumptions made should be given (e.g., Normally distributed errors).
        \item It should be clear whether the error bar is the standard deviation or the standard error of the mean.
        \item It is OK to report 1-sigma error bars, but one should state it. The authors should preferably report a 2-sigma error bar than state that they have a 96\% CI, if the hypothesis of Normality of errors is not verified.
        \item For asymmetric distributions, the authors should be careful not to show in tables or figures symmetric error bars that would yield results that are out of range (e.g., negative error rates).
        \item If error bars are reported in tables or plots, the authors should explain in the text how they were calculated and reference the corresponding figures or tables in the text.
    \end{itemize}

\item {\bf Experiments compute resources}
    \item[] Question: For each experiment, does the paper provide sufficient information on the computer resources (type of compute workers, memory, time of execution) needed to reproduce the experiments?
    \item[] Answer: \answerYes{} %
    \item[] Justification: See the appendices, in the subsection Computing Resources.
    \item[] Guidelines:
    \begin{itemize}
        \item The answer \answerNA{} means that the paper does not include experiments.
        \item The paper should indicate the type of compute workers CPU or GPU, internal cluster, or cloud provider, including relevant memory and storage.
        \item The paper should provide the amount of compute required for each of the individual experimental runs as well as estimate the total compute. 
        \item The paper should disclose whether the full research project required more compute than the experiments reported in the paper (e.g., preliminary or failed experiments that didn't make it into the paper). 
    \end{itemize}
    
\item {\bf Code of ethics}
    \item[] Question: Does the research conducted in the paper conform, in every respect, with the NeurIPS Code of Ethics \url{https://neurips.cc/public/EthicsGuidelines}?
    \item[] Answer: \answerYes{} %
    \item[] Guidelines:
    \begin{itemize}
        \item The answer \answerNA{} means that the authors have not reviewed the NeurIPS Code of Ethics.
        \item If the authors answer \answerNo, they should explain the special circumstances that require a deviation from the Code of Ethics.
        \item The authors should make sure to preserve anonymity (e.g., if there is a special consideration due to laws or regulations in their jurisdiction).
    \end{itemize}

\item {\bf Broader impacts}
    \item[] Question: Does the paper discuss both potential positive societal impacts and negative societal impacts of the work performed?
    \item[] Answer: \answerYes{} %
    \item[] Justification: We have discussed this point in the conclusion.
    \item[] Guidelines:
    \begin{itemize}
        \item The answer \answerNA{} means that there is no societal impact of the work performed.
        \item If the authors answer \answerNA{} or \answerNo, they should explain why their work has no societal impact or why the paper does not address societal impact.
        \item Examples of negative societal impacts include potential malicious or unintended uses (e.g., disinformation, generating fake profiles, surveillance), fairness considerations (e.g., deployment of technologies that could make decisions that unfairly impact specific groups), privacy considerations, and security considerations.
        \item The conference expects that many papers will be foundational research and not tied to particular applications, let alone deployments. However, if there is a direct path to any negative applications, the authors should point it out. For example, it is legitimate to point out that an improvement in the quality of generative models could be used to generate Deepfakes for disinformation. On the other hand, it is not needed to point out that a generic algorithm for optimizing neural networks could enable people to train models that generate Deepfakes faster.
        \item The authors should consider possible harms that could arise when the technology is being used as intended and functioning correctly, harms that could arise when the technology is being used as intended but gives incorrect results, and harms following from (intentional or unintentional) misuse of the technology.
        \item If there are negative societal impacts, the authors could also discuss possible mitigation strategies (e.g., gated release of models, providing defenses in addition to attacks, mechanisms for monitoring misuse, mechanisms to monitor how a system learns from feedback over time, improving the efficiency and accessibility of ML).
    \end{itemize}
    
\item {\bf Safeguards}
    \item[] Question: Does the paper describe safeguards that have been put in place for responsible release of data or models that have a high risk for misuse (e.g., pre-trained language models, image generators, or scraped datasets)?
    \item[] Answer: \answerNA{} %
    \item[] Justification: We do not believe our work poses a high risk for misuse.
    \item[] Guidelines:
    \begin{itemize}
        \item The answer \answerNA{} means that the paper poses no such risks.
        \item Released models that have a high risk for misuse or dual-use should be released with necessary safeguards to allow for controlled use of the model, for example by requiring that users adhere to usage guidelines or restrictions to access the model or implementing safety filters. 
        \item Datasets that have been scraped from the Internet could pose safety risks. The authors should describe how they avoided releasing unsafe images.
        \item We recognize that providing effective safeguards is challenging, and many papers do not require this, but we encourage authors to take this into account and make a best faith effort.
    \end{itemize}

\item {\bf Licenses for existing assets}
    \item[] Question: Are the creators or original owners of assets (e.g., code, data, models), used in the paper, properly credited and are the license and terms of use explicitly mentioned and properly respected?
    \item[] Answer: \answerYes{} %
    \item[] Justification: We have cited the three datasets we used, as well as the machine learning framework Pytorch and the DP framework Opacus.
    \item[] Guidelines:
    \begin{itemize}
        \item The answer \answerNA{} means that the paper does not use existing assets.
        \item The authors should cite the original paper that produced the code package or dataset.
        \item The authors should state which version of the asset is used and, if possible, include a URL.
        \item The name of the license (e.g., CC-BY 4.0) should be included for each asset.
        \item For scraped data from a particular source (e.g., website), the copyright and terms of service of that source should be provided.
        \item If assets are released, the license, copyright information, and terms of use in the package should be provided. For popular datasets, \url{paperswithcode.com/datasets} has curated licenses for some datasets. Their licensing guide can help determine the license of a dataset.
        \item For existing datasets that are re-packaged, both the original license and the license of the derived asset (if it has changed) should be provided.
        \item If this information is not available online, the authors are encouraged to reach out to the asset's creators.
    \end{itemize}

\item {\bf New assets}
    \item[] Question: Are new assets introduced in the paper well documented and is the documentation provided alongside the assets?
    \item[] Answer: \answerNA{} %
    \item[] Guidelines:
    \begin{itemize}
        \item The answer \answerNA{} means that the paper does not release new assets.
        \item Researchers should communicate the details of the dataset\slash code\slash model as part of their submissions via structured templates. This includes details about training, license, limitations, etc. 
        \item The paper should discuss whether and how consent was obtained from people whose asset is used.
        \item At submission time, remember to anonymize your assets (if applicable). You can either create an anonymized URL or include an anonymized zip file.
    \end{itemize}

\item {\bf Crowdsourcing and research with human subjects}
    \item[] Question: For crowdsourcing experiments and research with human subjects, does the paper include the full text of instructions given to participants and screenshots, if applicable, as well as details about compensation (if any)? 
    \item[] Answer: \answerNA{} %
    \item[] Guidelines:
    \begin{itemize}
        \item The answer \answerNA{} means that the paper does not involve crowdsourcing nor research with human subjects.
        \item Including this information in the supplemental material is fine, but if the main contribution of the paper involves human subjects, then as much detail as possible should be included in the main paper. 
        \item According to the NeurIPS Code of Ethics, workers involved in data collection, curation, or other labor should be paid at least the minimum wage in the country of the data collector. 
    \end{itemize}

\item {\bf Institutional review board (IRB) approvals or equivalent for research with human subjects}
    \item[] Question: Does the paper describe potential risks incurred by study participants, whether such risks were disclosed to the subjects, and whether Institutional Review Board (IRB) approvals (or an equivalent approval/review based on the requirements of your country or institution) were obtained?
    \item[] Answer: \answerNA{} %
    \item[] Guidelines:
    \begin{itemize}
        \item The answer \answerNA{} means that the paper does not involve crowdsourcing nor research with human subjects.
        \item Depending on the country in which research is conducted, IRB approval (or equivalent) may be required for any human subjects research. If you obtained IRB approval, you should clearly state this in the paper. 
        \item We recognize that the procedures for this may vary significantly between institutions and locations, and we expect authors to adhere to the NeurIPS Code of Ethics and the guidelines for their institution. 
        \item For initial submissions, do not include any information that would break anonymity (if applicable), such as the institution conducting the review.
    \end{itemize}

\item {\bf Declaration of LLM usage}
    \item[] Question: Does the paper describe the usage of LLMs if it is an important, original, or non-standard component of the core methods in this research? Note that if the LLM is used only for writing, editing, or formatting purposes and does \emph{not} impact the core methodology, scientific rigor, or originality of the research, declaration is not required.
    \item[] Answer: \answerNA{} %
    \item[] Guidelines:
    \begin{itemize}
        \item The answer \answerNA{} means that the core method development in this research does not involve LLMs as any important, original, or non-standard components.
        \item Please refer to our LLM policy in the NeurIPS handbook for what should or should not be described.
    \end{itemize}

\end{enumerate}

\end{document}